\newtheorem{thm}{Theorem}[section]
\newtheorem{lem}{Lemma}[section]
\newtheorem{assumption}{Assumption}[section]
\newcounter{append_lemma}
\newtheorem{appendLemma}[append_lemma]{Lemma}
\newcounter{append_cor}
\newtheorem{appendCorollary}[append_cor]{Corollary}
\newcommand{\lsp}[1]{\large\renewcommand{\baselinestretch}{#1}\normalsize}
\title{Sparse Trace Norm Regularization}
\author{\normalsize{Jianhui Chen\footnote{This work was done when the first author was a Ph.D. student at Arizona State University.}} \\ \small{GE Global Research, Niskayuna, NY}  \and \normalsize{Jieping Ye} \\ \small{Arizona State University, Tempe, AZ}}
\begin{document}

\maketitle

\begin{abstract}

We study the problem of estimating multiple predictive functions from
a dictionary of basis functions in the nonparametric regression
setting. Our estimation scheme assumes that each predictive function
can be estimated in the form of a linear combination of the basis
functions. By assuming that the coefficient matrix admits a sparse
low-rank structure, we formulate the function estimation problem as a
convex program regularized by the trace norm and the $\ell_1$-norm
simultaneously. We propose to solve the convex program using the
accelerated gradient (AG) method and the alternating direction method
of multipliers (ADMM) respectively; we also develop efficient
algorithms to solve the key components in both AG and ADMM. In
addition, we conduct theoretical analysis on the
proposed function estimation scheme: we derive a key property of the
optimal solution to the convex program; based on an assumption on the
basis functions, we establish a performance bound of the proposed
function estimation scheme (via the composite regularization).
Simulation studies demonstrate the effectiveness and efficiency of
the proposed algorithms.
\end{abstract}

\section{Introduction}

We study the problem of estimating multiple predictive functions from
noisy observations. Such a problem has received broad attention in
many areas of statistics and machine
learning~\cite{Bunea-lasso-colt06,Huang-sparse-icml09,Lounici-MTL-COLT08,Negahban-ICML10}.
This line of work can be roughly divided into two categories:
parametric estimation and non-parametric estimation; a common and
important theme for both categories is the appropriate assumption of
the structure in the model parameters (parametric setting) or the
coefficients of the dictionary (nonparametric setting).

There has been an enormous amount of literature on effective function
estimation based on different sparsity constraints, including the
estimation of the sparse linear regression via $\ell_1$-norm
penalty~\cite{bickel-simulLassoDantzig-AnnalsStat09,Bunea-lasso-colt06,Tibshirani-lasso-96,Zhang-sharp-AnnalsStat09},
and the estimation of the linear regression functions using group
lasso estimator~\cite{Huang-sparse-icml09,Lounici-MTL-COLT08}. More
recently, trace norm regularization has become a popular tool for
approximating a set of linear models and the associated low-rank
matrices in the high-dimensional
setting~\cite{Negahban-ICML10,Tsybakov-tracenorm-10}; the trace norm
is the tightest convex surrogate~\cite{Fazel-ACL-01} for the
(non-convex) rank function under certain conditions, encouraging the
sparsity in the singular values of the matrix of interest. One
limitation of the use of trace norm regularization is that the
resulting model is dense in general. However, in many real-world
applications~\cite{Pati-coherent-94}, the underlying structure of
multiple predictive functions may be sparse as well as low-rank; the
sparsity leads to explicitly interpretable prediction models and the
low-rank implies essential subspace structure information. Similarly,
the $\ell_1$-norm is the tightest convex surrogate for the non-convex
cardinality function~\cite{Boyd-Convex-04}, encouraging the sparsity
in the entries of the matrix. This motivates us to explore the use of
the combination of the trace norm and the $\ell_1$-norm as a
composite regularization (called sparse trace norm regularization) to
induce the desirable sparse low-rank structure.

Trace norm regularization (minimization) has been investigated
extensively in recent years. Efficient algorithms have been developed
for solving convex programs with trace norm
regularization~\cite{Toh-tracenorm-09,Fazel-ACL-01}; sufficient
conditions for exact recovery from trace norm minimization have been
established in~\cite{Benjamin-trace-siam07}; consistency of trace
norm minimization has been studied in~\cite{Bach-Tracenorm-JMLR08};
trace norm minimization has been applied for matrix
completion~\cite{Emmanuel-exact-08} and collaborative
filtering~\cite{Nathan-TraceNorm-nips04,Jason-tracenorm-icml05}.
Similarly, $\ell_1$-norm regularization has been well studied in the
literature, just to mention a few, from the efficient algorithms for
convex
optimization~\cite{Efron-LARS-04,Friedman-lasso-AnnalsStat07,Toh-tracenorm-09},
theoretical guarantee of the
performance~\cite{Candes-DecodingBYLP-05,Zhang-sharp-AnnalsStat09},
and model selection consistency~\cite{Zhao-lasso-06}.


In this paper, we focus on estimating multiple predictive functions
simultaneously from a finite dictionary of basis functions in the
nonparametric regression setting. Our function estimation scheme
assumes that each predictive function can be approximated using a
linear combination of those basis functions. By assuming that the
coefficient matrix of the basis functions admits a sparse low-rank
structure, we formulate the function estimation problem as a convex
formulation, in which the combination of the trace norm and the
$\ell_1$-norm is employed as a composite regularization to induce a
sparse low-rank structure in the coefficient matrix. The simultaneous
sparse and low-rank structure is different from the incoherent sparse
and low-rank structures studied
in~\cite{Candes:robustPCA:2011,Chandrasekaran-Incoherent-SYSID09}. We
propose to solve the function estimation problem using the
accelerated gradient method and the alternating direction method of
multipliers; we also develop efficient algorithms to solve the key
components involved in both methods. We conduct theoretical analysis
on the proposed convex formulation: we first present some basic
properties of the optimal solution to the convex formulation
(Lemma~\ref{lem:basic}); we then present an assumption associated
with the geometric nature of the basis functions over the prescribed
observations; based on such an assumption, we derive a performance
bound for the combined regularization for function estimation
(Theorem~\ref{thm:oracle}). We conduct simulations on benchmark data
to demonstrate the effectiveness and efficiency of the proposed
algorithms.
\vskip -0.05in
{\bf Notation} Denote $\mathbb{N}_n = \{1, \cdots, n\}$. For any
matrix $\Theta$, denote its trace norm by $\|\Theta\|_*$, i.e., the
sum of the singular values; denote its operator norm by
$\|\Theta\|_2$, i.e., the largest singular value; denote its
$\ell_1$-norm by $\|\Theta\|_1$, i.e., the sum of absolute value of
all entries.

\section{Problem Formulation}

Let $\{ (x_1, y_1), \cdots, (x_n, y_n) \} \subset \mathbb{R}^d \times
\mathbb{R}^k$ be a set of prescribed sample pairs (fixed design)
associated with $k$ unknown functions $\{f_1, \cdots, f_k \}$ as
\begin{equation} \label{eq:linear-reg-model}
y_{ij} = f_j (x_i) + w_{ij}, \quad i \in \mathbb{N}_n, \, j \in \mathbb{N}_k,
\end{equation}
where $f_j: \mathbb{R}^d \rightarrow \mathbb{R}$ is an unknown
regression function, $y_{ij}$ denotes the $j$-th entry of the
response vector $y_i \in \mathbb{R}^k$, and $w_{ij} \sim \mathcal{N}
(0, \sigma_w^2)$ is a stochastic noise variable. Let $X = [x_1,
\cdots, x_n]^T \in \mathbb{R}^{n \times d}$, $Y = [y_1, \cdots,
y_n]^T \in \mathbb{R}^{n \times k}$, and $W = \left( w_{ij}
\right)_{i,j} \in \mathbb{R}^{n \times k}$. Denoting
\begin{equation} \label{eq:def-F}
\mathcal{F} = \left( f_j (x_i) \right)_{i,j} \in \mathbb{R}^{n
\times k}, \quad i \in \mathbb{N}_n, \, j \in \mathbb{N}_k,
\end{equation}
we can rewrite Eq.~(\ref{eq:linear-reg-model}) in a compact form as
$Y = \mathcal{F} + W$.
Let $\{g_1, \cdots, g_h\}$ be a set of $h$ pre-specified basis
functions as $g_i: \mathbb{R}^{d} \rightarrow \mathbb{R}$, and let
$\Theta = [\theta_1, \cdots, \theta_k] \in \mathbb{R}^{h \times k}$
be the coefficient matrix. We define
\begin{equation} \label{eq:estimate-f}
\hat g_j (x) = \sum_{i=1}^h \theta_{ij} g_i (x), \quad j \in
\mathbb{N}_k,
\end{equation}
where $\theta_{ij}$ denotes the $i$-th entry in the vector
$\theta_j$. Note that in practice the basis functions $\{g_i\}$ can be
estimators from different methods, or different values of the tuning
parameters of the same method.

We consider the problem of estimating the unknown functions
$\{f_1, \cdots, f_k \}$ using the composite functions $\{\hat g_1,
\cdots, \hat g_k \}$ defined in Eq.~(\ref{eq:estimate-f}),
respectively. Denote
\begin{equation} \label{eq:def-G}
\mathcal{G}_X = \left( g_j (x_i) \right)_{i,j} \in \mathbb{R}^{n
\times h}, \quad i \in \mathbb{N}_n, \, j \in \mathbb{N}_h,
\end{equation}
and define the empirical error as
\begin{eqnarray} \label{eq:error}
\widehat S(\Theta) \hskip -0.03in = \hskip -0.03in \frac{1}{n k} \sum_{i=1}^{n} \sum_{j=1}^{k} \left(
\hat g_j(x_i) - y_{ij} \right)^2 \hskip -0.03in = \hskip -0.03in \frac{1}{N} \| \mathcal{G}_X
\Theta - Y\|_F^2, \hskip -0.09in
\end{eqnarray}
where $N = n \times k$. Our goal is to estimate the model parameter
$\Theta$ of a sparse low-rank structure from the given $n$ sample
pairs $\{ (x_i, y_i)\}_{i=1}^{n}$. Such a structure induces the
sparsity and the low rank simultaneously in a single matrix of
interest.

Given that the functions $\{f_1, \cdots, f_k \}$ are coupled via
$\Theta$ in some coherent sparse and low-rank structure, we propose to estimate $\Theta$ as
\begin{equation} \label{eq:def-minimizer-to-SDP}
\widehat \Theta =  \arg \min_\Theta \left( \widehat S(\Theta) +
\alpha \|\Theta\|_* +
\beta \|\Theta\|_1 \right),
\end{equation}
where $\alpha$ and $\beta$ are regularization parameters (estimated via cross-validation), and the linear combination of
$\| \Theta\|_*$ and $\| \Theta\|_1$ is used to induce the sparse low-rank structure in $\Theta$. The optimization problem in
Eq.~(\ref{eq:def-minimizer-to-SDP}) is non-smooth convex and hence admits a
globally optimal solution; it can be solved using many
sophisticated optimization
techniques~\cite{Toh-SDPT3-OptSW99,Fazel-ACL-01};
in Section~\ref{sec:algorithm}, we propose to apply the accelerated gradient method~\cite{Nesterov-IntrodConvexOpt-note98} and the alternating direction method of multipliers~\cite{Boyd-ADMM-10} to solve the optimization problem in
Eq.~(\ref{eq:def-minimizer-to-SDP}).

\section{Optimization Algorithms} \label{sec:algorithm}

In this section, we consider to apply the accelerated gradient~(AG)
algorithm~\cite{Beck-fast-09,Nesterov-IntrodConvexOpt-note98,Nesterov:2007}
and the alternating direction method of multipliers
(ADMM)~\cite{Boyd-ADMM-10}, respectively, to solve the (non-smooth
and convex) optimization problem in
Eq.~(\ref{eq:def-minimizer-to-SDP}). We also develop efficient
algorithms to solve the key components involved in both AG and ADMM.

\subsection{Accelerated Gradient Algorithm}

The AG algorithm has attracted extensive attention in the machine learning community due to its optimal convergence rate among all first order techniques and its ability of dealing with large scale data. The general scheme in AG for solving Eq.~(\ref{eq:def-minimizer-to-SDP}) can be described as below: at the $k$-th iteration, the intermediate (feasible) solution $\Theta_k$ can be obtained via
\begin{equation} \label{eq:original-form-key-step}
\Theta_k = \arg \min_\Theta \left( \frac{\gamma_k}{2} \left \| \Theta - \hskip -0.05in \left( \Phi_k - \frac{1}{\gamma_k} \nabla \widehat S(\Phi_k) \right) \hskip -0.02in \right \|_F^2 \hskip -0.1in + \alpha \| \Theta \|_* + \beta \| \Theta \|_1 \right),
\end{equation}
where $\Phi_k$ denotes a searching point constructed on the intermediate solutions from previous iterations, $\nabla \widehat S(\Phi_k)$ denotes the derivative of the loss function in Eq.~(\ref{eq:error}) at $\Phi_k$, and $\gamma_k$ specifies the step size which can be determined by iterative increment until the condition
\begin{equation*}
\widehat S(\Theta_k) \le \widehat S(\Phi_k) + \langle \nabla f(\Phi_k), \Theta_k - \Phi_k \rangle + \frac{\gamma_k}{2} \| \Theta_k - \Phi_k \|_F^2
\end{equation*}
is satisfied. The operation in Eq.~(\ref{eq:original-form-key-step}) is commonly referred to as proximal operator~\cite{Moreau:1965}, and its efficient computation is critical for the practical convergence of the AG-type algorithm. Next we present an efficient alternating optimization procedure to solve Eq.~(\ref{eq:original-form-key-step}) with a given $\gamma_k$.


\subsubsection{Dual Formulation}

The problem in Eq.~(\ref{eq:original-form-key-step}) is not easy to solve directly; next we show that this problem can be efficiently solved in its dual form. By reformulating $\| \Theta \|_*$ and $\|\Theta|_1$ into the equivalent dual forms, we convert Eq.~(\ref{eq:original-form-key-step}) into a max-min formulation as
\begin{eqnarray} \label{Eq:max-min}
\max_{L, S} \min_\Theta  \,\, \| \Theta - \widehat \Phi \|_F^2 + \widehat \alpha \langle  L, \Theta \rangle + \widehat \beta \langle S, \Theta \rangle, \,\,\,\, \mbox{subject to}  \,\, \|L\|_2 \le 1, \,\, \|S\|_\infty \le 1,
\end{eqnarray}
where $\widehat \Phi = \Phi_k - \nabla \widehat S(\Phi_k) / \gamma_k$, $\widehat \alpha = 2 \alpha /\gamma_k$, and $\widehat \beta = 2 \beta /\gamma_k$. It can be verified that in Eq.~(\ref{Eq:max-min}) the Slater condition is satisfied and strong duality holds~\cite{Boyd-Convex-04}. Also the optimal $\Theta$ can be expressed as a function of $L$ and $S$ given by
\begin{equation} \label{eq:opt-theta-max-min}
\Theta =  \widehat \Phi - \frac{1}{2} (\widehat \alpha L +  \widehat \beta S).
\end{equation}
By substituting Eq.~(\ref{eq:opt-theta-max-min}) into Eq.~(\ref{Eq:max-min}),
we obtain the dual form of Eq.~(\ref{eq:original-form-key-step}) as
\begin{eqnarray} \label{eq:two-block}
\min_{L, S}     \,\, \| \widehat \alpha L + \widehat \beta S - 2 \widehat \Phi \|_F^2, \,\,\,\, \mbox{subject to}   \,\, \|L\|_2 \le 1, \,\, \|S\|_\infty \le 1.
\end{eqnarray}

\subsubsection{Alternating Optimization} \label{subsec:alt-opt}

The optimization problem in Eq.~(\ref{eq:two-block}) is smooth convex and it has two optimization variables. For such type of problems, coordinate descent (CD) method is routinely used to compute its globally optimal solution~\cite{Grippoa-cd-Orletter00}.
To solve Eq.~(\ref{eq:two-block}), the CD method alternatively optimizes one of the two variables with the other variable fixed. Our analysis below shows that the variables $L$ and $S$ in Eq.~(\ref{eq:two-block}) can be optimized efficiently.
Note that the convergence rate of the CD method is not known, however, it converges very fast in practice (less than $10$ iterations in our experiments).

{\bf Optimization of L} For a given $S$, the variable $L$ can be optimized via solving the following problem:
\begin{eqnarray} \label{eq:opt-low-rank}
\min_L                  \,\, \| L - \widehat L \|_F^2, \,\,\,\, \mbox{subject to}   \,\, \|L\|_2 \le 1,
\end{eqnarray}
where $\widehat L = ( 2 \widehat \Phi - \widehat \beta S) / \widehat \alpha$. The optimization on $L$ above can be interpreted as computing an optimal projection of a given matrix over a unit spectral norm ball. Our analysis shows that the optimal solution to Eq.~(\ref{eq:opt-low-rank}) can be expressed in an analytic form as summarized in the following theorem.
\begin{thm} \label{thm:projection-spectral-norm}
For arbitrary $\widehat L \in \mathbb{R}^{h \times k}$ in Eq.~(\ref{eq:opt-low-rank}), denote its SVD by $\widehat L = U \Sigma V^T$, where $r = \mbox{rank} (\widehat L)$, $U \in \mathbb{R}^{h \times r}$, $V \in \mathbb{R}^{k \times r}$, and $\Sigma = \mbox{diag} \left(\sigma_1, \cdots, \sigma_r\right) \in \mathbb{R}^{r \times r}$. Let
$\hat \sigma_i^* = \min \left( \sigma_i, 1  \right), \,\,\, i = 1, \cdots, r$. Then the optimal solution to Eq.~(\ref{eq:opt-low-rank}) is given by
\begin{equation} \label{eq:solution-L}
L^* = U \hat \Sigma V^T, \,\, \hat \Sigma = \mbox{diag} \left(\hat \sigma_1^*, \cdots, \hat \sigma_r^* \right).
\end{equation}
\end{thm}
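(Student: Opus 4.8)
The plan is to exploit the unitary invariance of the Frobenius norm and thereby reduce the matrix optimization in Eq.~(\ref{eq:opt-low-rank}) to a decoupled scalar problem in the singular values. First I would expand the squared distance as
\[
\| L - \widehat L \|_F^2 = \|L\|_F^2 - 2 \langle L, \widehat L \rangle + \|\widehat L\|_F^2,
\]
and, since the final term is a constant independent of $L$, observe that Eq.~(\ref{eq:opt-low-rank}) is equivalent to maximizing $2 \langle L, \widehat L \rangle - \|L\|_F^2$ over the feasible set $\{ L : \|L\|_2 \le 1 \}$. Writing the singular values of a candidate $L$ as $\tau_1 \ge \cdots \ge \tau_{m} \ge 0$ (with $m = \min(h,k)$, padding with zeros as needed, and the constraint $\|L\|_2 \le 1$ reading as $\tau_1 \le 1$), I note that $\|L\|_F^2 = \sum_i \tau_i^2$ depends only on these values.

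The key tool is von Neumann's trace inequality, which gives $\langle L, \widehat L \rangle \le \sum_i \tau_i \sigma_i$ for the (descending) singular values of the two matrices, with equality precisely when $L$ and $\widehat L$ admit a simultaneous singular value decomposition, i.e.\ share the singular-vector pair $(U, V)$. Combining these, the objective is bounded above by the separable expression $\sum_i \left( 2 \tau_i \sigma_i - \tau_i^2 \right)$, to be maximized subject to $0 \le \tau_i \le 1$. Because this upper bound decouples across $i$, I would maximize each term $g(\tau) = 2 \tau \sigma_i - \tau^2$ on $[0,1]$ individually: the unconstrained maximizer is $\tau = \sigma_i$, and since $g$ is increasing on $[0, \sigma_i]$, the constrained maximizer is $\tau_i^* = \min(\sigma_i, 1)$, matching $\hat \sigma_i^*$. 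For indices $i > r$ one has $\sigma_i = 0$, forcing $\tau_i^* = 0$, so the optimizer has rank at most $r$.

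Finally I would verify that the candidate $L^* = U \hat \Sigma V^T$ in Eq.~(\ref{eq:solution-L}) simultaneously attains both inequalities: its singular values are exactly the optimal $\hat\sigma_i^*$, it satisfies $\|L^*\|_2 = \max_i \hat\sigma_i^* \le 1$ so it is feasible, and because it shares $(U,V)$ with $\widehat L$ the von Neumann bound holds with equality. Hence $L^*$ achieves the separable upper bound, which is therefore the true maximum, establishing optimality. The main obstacle is the reduction step: justifying that an optimal $L$ may be taken to share singular vectors with $\widehat L$, which is precisely the content of the equality condition in von Neumann's trace inequality; once that decoupling is in hand, the remaining per-coordinate optimization is routine.
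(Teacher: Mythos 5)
Your proof is correct, but it follows a genuinely different route from the paper's. The paper proceeds via Lagrangian duality: it forms $h(L,\lambda)=\|L-\widehat L\|_F^2+\lambda(\|L\|_2-1)$, invokes Slater's condition and the optimality condition ${\bf 0}\in 2(L^*-\widehat L)+\lambda^*\partial\|L^*\|_2$, and uses Watson's characterization of the subdifferential of the spectral norm to conclude \emph{a posteriori} that $L^*$ and $\widehat L$ share singular vector pairs, which legitimizes the reduction to the scalar problems $\min_{0\le\hat\sigma_i\le 1}(\hat\sigma_i-\sigma_i)^2$. You instead bypass duality entirely: von Neumann's trace inequality gives the upper bound $2\langle L,\widehat L\rangle-\|L\|_F^2\le\sum_i(2\tau_i\sigma_i-\tau_i^2)$ for every feasible $L$, the right-hand side decouples and is maximized at $\tau_i^*=\min(\sigma_i,1)$ (note these values are automatically nonincreasing, so relaxing the ordering constraint is harmless), and the explicit candidate $L^*=U\hat\Sigma V^T$ attains the bound by direct computation. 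Your approach buys a cleaner, self-contained argument: it needs only the inequality half of von Neumann plus a one-line verification for $L^*$, whereas the paper's KKT argument is more delicate (the subdifferential only supplies singular vectors associated with the \emph{top} singular value of $L^*$, and the case $\lambda^*=0$, i.e.\ $\|\widehat L\|_2\le 1$, needs separate though trivial treatment). What the paper's route buys is the explicit dual/KKT structure, which generalizes more readily to other spectrally constrained projections. One small correction to your closing remark: the equality case of von Neumann's inequality is not actually an obstacle in your argument — you never need to show that an \emph{arbitrary} optimizer shares singular vectors with $\widehat L$; you only need the inequality for all feasible $L$ together with the directly verified equality for your explicit $L^*$, which already certifies optimality.
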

\begin{proof}
Assume the existence of a set of left and right singular vector pairs shared by the optimal $L^*$ to Eq.~(\ref{eq:opt-low-rank}) and the given $\widehat L$ for their non-zero singular values. Under such an assumption, it can be verified that the singular values of $L^*$ can be obtained via
\begin{eqnarray*}
\min_{\{\hat \sigma_i\}} \,\, \left( \hat \sigma_i - \sigma_i \right)^2, \,\,\,\, \mbox{subject to}   \,\,  0 \le \hat \sigma_i \le 1, \,\, i = 1, \cdots, r,
\end{eqnarray*}
to which the optimal solution is given by $\hat \sigma_i^* = \min (\sigma_i, 1)~(\forall i)$; hence the expression of $L^*$ coincides with Eq.~(\ref{eq:solution-L}). Therefore, all that remains is to show that our assumption (on the left and right singular vector pairs of $L^*$ and $\widehat L$) holds.

Denote the Lagrangian associated with the problem in Eq.~(\ref{eq:opt-low-rank}) as $h(L, \lambda) = \|L - \widehat L\|_F^2 + \lambda \left( \|L \|_2 - 1 \right)$,
where $\lambda$ denotes the dual variable. Since $\bf 0$ is strictly feasible in Eq.~(\ref{eq:opt-low-rank}), namely, $\|{\bf 0}\|_2 < 1$, strong duality holds for Eq.~(\ref{eq:opt-low-rank}). Let $\lambda^*$ be the optimal dual variable to Eq.~(\ref{eq:opt-low-rank}). Therefore we have $L^* = \arg \min_{L} h(L, \lambda^*)$.
It is well known that $L^*$ minimizes $h(L, \lambda^*)$ if and only if $\bf 0$ is a subgradient of $h(L, \lambda^*)$ at $L^*$, i.e.,
\begin{eqnarray} \label{eq:derive-subgradient}
{\bf 0} \in 2 ( L^* - \widehat L ) + \lambda^* \partial \|L^* \|_2.
\end{eqnarray}
For any matrix $Z$, the subdifferential of $\|Z\|_2$ is given by~\cite{Watson-subdiff-92}
$\partial \|Z\|_2  =  \mbox{conv} \left\{ u_z v_z^T : \|u_z\|  =  \|v_z\|  =  1, Z v_z  =  \|Z\|_2 u_z \right\}$,
where $\mbox{conv} \{c\}$ denotes the convex hull of the set $c$. Specifically, any element of $\partial \|Z\|_2$ has the form
\begin{equation*}
\sum_{i} \alpha_i u_{zi} v_{zi}^T, \,\, \alpha_i \ge 0, \,\, \sum_{i} \alpha_i = 1,
\end{equation*}
where $u_{zi}$ and $v_{zi}$ are any left and right singular vectors of $Z$ corresponding to its largest singular value (the top singular values may share a common value). From Eq.~(\ref{eq:derive-subgradient}) and the definition of $\partial \|Z\|_2$, there exist $\{\hat \alpha_i\}$ such that
$\hat \alpha_i > 0, \,\, \sum_{i} \hat \alpha_i = 1, \,\, \sum_{i} \hat \alpha_i u_{li} v_{li}^T \in \partial \| L^* \|_2$, and
\begin{equation} \label{eq:svd-L-star-L-hat}
\widehat L = L^* + \frac{\lambda^*}{2} \sum_{i} \hat \alpha_i u_{li} v_{li}^T,
\end{equation}
where $u_{li}$ and $v_{li}^T$ correspond to any left and right singular vectors of $L^*$ corresponding to its largest singular value. Since $\lambda^*, \hat \alpha_i > 0$, Eq.~(\ref{eq:svd-L-star-L-hat}) verifies the existence of a set of left and right singular vector pairs shared by $L^*$ and $\widehat L$. This completes the proof.
\end{proof}

{\bf Optimization of S} For a given $L$, the variable $S$ can be optimized via solving the following problem:
\begin{eqnarray} \label{eq:opt-sparse}
\min_S     \,\, \| S - \widehat S \|_F^2, \,\,\,\, \mbox{subject to}  \,\, \|S\|_\infty \le 1,
\end{eqnarray}
where $\widehat S = ( 2 \widehat \Phi - \widehat \alpha L ) / \widehat \beta$. Similarly, the optimization on $S$ can be interpreted as computing a projection of a given matrix over an infinity norm ball. It also admits an analytic solution as summarized in the following theorem.
\begin{lem} \label{lem:projection-infinity-norm}
For any matrix $\widehat S$, the optimal solution to Eq.~(\ref{eq:opt-sparse}) is given by
\begin{equation}
S^* = \mbox{sgn} (\widehat S) \circ \min (|\widehat S|, {\bf 1}),
\end{equation}
where $\circ$ denotes the component-wise multiplication operator, and $\bf 1$ denotes the matrix with entries $1$ of appropriate size.
\end{lem}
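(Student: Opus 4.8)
The plan is to exploit the complete separability of both the objective and the constraint across the entries of the matrix. The squared Frobenius distance $\|S - \widehat S\|_F^2 = \sum_{i,j} (S_{ij} - \widehat S_{ij})^2$ is a sum of independent terms, one per entry. The infinity-norm constraint $\|S\|_\infty \le 1$, being the entrywise maximum of absolute values, is equivalent to the collection of box constraints $|S_{ij}| \le 1$ for every pair $(i,j)$. Since neither the objective nor the feasible region couples distinct entries, the problem decomposes into independent scalar problems, one for each entry.

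First I would isolate a single scalar subproblem: minimize $(s - \widehat s)^2$ subject to $-1 \le s \le 1$, where $s = S_{ij}$ and $\widehat s = \widehat S_{ij}$. This is exactly the Euclidean projection of $\widehat s$ onto the interval $[-1, 1]$. I would then verify directly that the unique minimizer is the clamped value: it equals $\widehat s$ when $|\widehat s| \le 1$, and it equals $\mbox{sgn}(\widehat s)$ otherwise. This follows from a short two-case argument on the sign of $\widehat s$ together with the monotonicity of $(s - \widehat s)^2$ on either side of $\widehat s$; equivalently, one may read it off the KKT conditions for the box constraint.

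Assembling the per-entry solutions, the minimizer has entries $s^*_{ij} = \mbox{sgn}(\widehat S_{ij}) \min(|\widehat S_{ij}|, 1)$, which is precisely the component-wise formula $S^* = \mbox{sgn}(\widehat S) \circ \min(|\widehat S|, {\bf 1})$ in the statement. Because each scalar subproblem has a strictly convex objective on a compact interval, each has a unique minimizer, so the assembled $S^*$ is the unique optimal solution to the original matrix problem.

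I do not anticipate a genuine obstacle here: the essential content is the recognition that the infinity-norm ball is a Cartesian product of intervals, after which everything reduces to the elementary one-dimensional projection. The only point needing minor care is confirming that $\|S\|_\infty$ denotes the entrywise maximum of absolute values (the dual norm of the $\ell_1$-norm used in the primal), since this is exactly what makes the constraint separable; this reading is consistent with the notation fixed in the introduction.
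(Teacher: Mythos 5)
Your proof is correct and complete: the paper itself states Lemma~\ref{lem:projection-infinity-norm} without any proof, treating the analytic solution as evident, and your entrywise-separability argument (the $\ell_\infty$-ball is a product of intervals $[-1,1]$, so the Frobenius projection reduces to independent scalar clamps) is exactly the standard justification the authors implicitly rely on. Your reading of $\|S\|_\infty$ as the entrywise maximum of absolute values is also the right one, since that constraint arises in Eq.~(\ref{Eq:max-min}) as the dual-norm ball of the $\ell_1$-norm, and your uniqueness remark (strict convexity of each scalar objective) is a small bonus the paper does not state.
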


\subsection{Alternating Direction Method of Multipliers}

The ADMM algorithm~\cite{Boyd-ADMM-10} is suitable for dealing with non-smooth (convex) optimizations problems, as it blends the decomposability of dual ascent with the superior convergence of the method of multipliers. We present two implementations of the ADMM algorithm for solving Eq.~(\ref{eq:def-minimizer-to-SDP}).
Due to the space constraint, we move the detailed discussion of two ADMM implementations to the supplemental material.

\section{Theoretical Analysis}

In this section, we present a performance bound for the function
estimation scheme in Eq.~(\ref{eq:estimate-f}). Such a performance bound measures how well the estimation scheme can approximate the regression functions $\{f_j\}$ in Eq.~(\ref{eq:def-F}) via the sparse low-rank coefficient $\Theta$.

\subsection{Basic Properties of the Optimal Solution}

We first present some basic properties of the optimal
solution defined in Eq.~(\ref{eq:def-minimizer-to-SDP}); these
properties are important building blocks of our following theoretical analysis.
\begin{lem} \label{lem:basic}
Consider the optimization problem in
Eq.~(\ref{eq:def-minimizer-to-SDP}) for $h, k \ge 2$ and $n \ge 1$.
Given $n$ sample pairs as $X = [x_1, \cdots, x_n]^T \in
\mathbb{R}^{n \times d}$ and $Y = [y_1, \cdots, y_n]^T \in
\mathbb{R}^{n \times k}$. Let $\mathcal{F}$ and $\mathcal{G}_X$ be
defined in Eq.~(\ref{eq:def-F}) and Eq.~(\ref{eq:def-G}),
respectively; let $\sigma_{X(l)}$ be the largest
singular values of $\mathcal{G}_X$.
Assume that $W \in \mathbb{R}^{n \times k}$ has independent and
identically distributed (i.i.d.) entries as $w_{ij} \sim \mathcal{N}
(0, \sigma_w^2 )$. Take
\begin{equation} \label{eq:def-lambda}
\alpha + \beta = \frac{2 \sigma_{X(l)} \sigma_w \sqrt{n}}{N} \left( 1 +
\sqrt{\frac{k}{n}} + t \right),
\end{equation}
where $N = n \times k$ and $t$ is a universal constant. Then with
probability of at least $1 - \exp \left( - n t^2 / 2 \right)$, for
the minimizer $\widehat \Theta$ in Eq.~(\ref{eq:def-minimizer-to-SDP})
and any $\Theta \in \mathbb{R}^{h \times k}$, we have
\begin{eqnarray} \label{eq:lem-1}
\frac{1}{N} \| \mathcal{G}_X \widehat \Theta - \mathcal{F} \|_F^2  \le  \frac{1}{N} \| \mathcal{G}_X \Theta  - \mathcal{F} \|_F^2 + 2 \alpha \|\mathcal{S}_0(\widehat \Theta - \Theta)\|_* + 2 \beta \|(\widehat \Theta - \Theta)_{J(\Theta)}\|_1,
\end{eqnarray}
where $\mathcal{S}_0$ is an operator defined in Lemma~\ref{lem:sep-singular-value} of the supplemental material.
\end{lem}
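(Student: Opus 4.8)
The plan is to derive the stated inequality as a ``basic inequality'' for the penalized least-squares estimator, combining the optimality of $\widehat\Theta$, the decomposability of the two regularizers, and a Gaussian concentration bound on the noise. Throughout write $\Delta = \widehat\Theta - \Theta$ and recall $Y = \mathcal{F} + W$. First I would use that $\widehat\Theta$ minimizes the objective in Eq.~(\ref{eq:def-minimizer-to-SDP}), so that $\widehat S(\widehat\Theta) + \alpha\|\widehat\Theta\|_* + \beta\|\widehat\Theta\|_1 \le \widehat S(\Theta) + \alpha\|\Theta\|_* + \beta\|\Theta\|_1$ for every competitor $\Theta$. Expanding $\widehat S$ from Eq.~(\ref{eq:error}) with $Y=\mathcal{F}+W$ and cancelling the common $\frac{1}{N}\|W\|_F^2$ term, the squared-error parts contribute $\frac{1}{N}\|\mathcal{G}_X\widehat\Theta-\mathcal{F}\|_F^2$ and $\frac{1}{N}\|\mathcal{G}_X\Theta-\mathcal{F}\|_F^2$, while the two cross terms collapse to the single noise inner product $\frac{2}{N}\langle \mathcal{G}_X\Delta, W\rangle = \frac{2}{N}\langle \Delta, \mathcal{G}_X^T W\rangle$. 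This yields
\begin{equation*}
\frac{1}{N}\|\mathcal{G}_X\widehat\Theta-\mathcal{F}\|_F^2 \le \frac{1}{N}\|\mathcal{G}_X\Theta-\mathcal{F}\|_F^2 + \frac{2}{N}\langle \Delta, \mathcal{G}_X^T W\rangle + \alpha(\|\Theta\|_* - \|\widehat\Theta\|_*) + \beta(\|\Theta\|_1 - \|\widehat\Theta\|_1),
\end{equation*}
so it remains to bound the last three terms.

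Next I would control the penalty differences by decomposability. Using the operator $\mathcal{S}_0$ from Lemma~\ref{lem:sep-singular-value} and its complement $\mathcal{S}_0^\perp$ (whose image is orthogonal, in both row and column spaces, to $\Theta$), the trace norm is decomposable, $\|\Theta + \mathcal{S}_0^\perp(\Delta)\|_* = \|\Theta\|_* + \|\mathcal{S}_0^\perp(\Delta)\|_*$, so a triangle-inequality step gives $\|\Theta\|_* - \|\widehat\Theta\|_* \le \|\mathcal{S}_0(\Delta)\|_* - \|\mathcal{S}_0^\perp(\Delta)\|_*$; splitting entries over the support $J(\Theta)$ and its complement similarly gives $\|\Theta\|_1 - \|\widehat\Theta\|_1 \le \|\Delta_{J(\Theta)}\|_1 - \|\Delta_{J(\Theta)^c}\|_1$. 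For the noise term, set $M = \frac{2}{N}\mathcal{G}_X^T W$ and form a convex combination of the two dual-norm bounds $\langle\Delta,M\rangle \le \|M\|_2\|\Delta\|_*$ and $\langle\Delta,M\rangle \le \|M\|_\infty\|\Delta\|_1$ with weights $\frac{\alpha}{\alpha+\beta}$ and $\frac{\beta}{\alpha+\beta}$; using the elementary inequality $\|M\|_\infty \le \|M\|_2$ and the event $\|M\|_2 \le \alpha+\beta$, this produces $\frac{2}{N}\langle\Delta,\mathcal{G}_X^T W\rangle \le \alpha\|\Delta\|_* + \beta\|\Delta\|_1$. Adding the three bounds and using $\|\Delta\|_* \le \|\mathcal{S}_0(\Delta)\|_* + \|\mathcal{S}_0^\perp(\Delta)\|_*$ together with $\|\Delta\|_1 = \|\Delta_{J(\Theta)}\|_1 + \|\Delta_{J(\Theta)^c}\|_1$, the $\mathcal{S}_0^\perp$ and $J(\Theta)^c$ contributions cancel exactly, leaving precisely $2\alpha\|\mathcal{S}_0(\Delta)\|_* + 2\beta\|\Delta_{J(\Theta)}\|_1$, which is the claimed right-hand side.

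It then remains to verify that the deterministic condition $\|M\|_2 \le \alpha+\beta$ holds on the stated high-probability event. Since $\|\mathcal{G}_X^T W\|_2 \le \|\mathcal{G}_X\|_2\|W\|_2 = \sigma_{X(l)}\|W\|_2$, it suffices to control the operator norm of the Gaussian matrix $W$. Writing $W = \sigma_w G$ with $G$ having i.i.d.\ $\mathcal{N}(0,1)$ entries, the map $G \mapsto \|G\|_2$ is $1$-Lipschitz, so Gaussian concentration together with the mean bound $\mathbb{E}\|G\|_2 \le \sqrt{n}+\sqrt{k}$ gives $\|G\|_2 \le \sqrt{n}+\sqrt{k}+t\sqrt{n}$ with probability at least $1-\exp(-nt^2/2)$. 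Multiplying by $\sigma_w$ and then by $\frac{2\sigma_{X(l)}}{N}$ reproduces exactly the value of $\alpha+\beta$ prescribed in Eq.~(\ref{eq:def-lambda}), so $\frac{2}{N}\|\mathcal{G}_X^T W\|_2 \le \alpha+\beta$ on this event, which closes the argument.

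I expect the main obstacle to be twofold. The first is invoking the Gaussian operator-norm concentration with precise enough constants that the tail matches the exact form $\sqrt{n}(1+\sqrt{k/n}+t)$ and the probability $1-\exp(-nt^2/2)$, in particular getting the $\sqrt{n}$ factor attached to $t$ (which comes from setting the deviation $s=t\sqrt{n}$). The second is arranging the cross-term split so that, after decomposability, the unwanted terms $\|\mathcal{S}_0^\perp(\Delta)\|_*$ and $\|\Delta_{J(\Theta)^c}\|_1$ cancel rather than merely being discarded; this is what forces the convex-combination bound with weights $\alpha/(\alpha+\beta)$ and $\beta/(\alpha+\beta)$ instead of a single cruder estimate. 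The remaining steps are routine bookkeeping with the two dual-norm inequalities and the decomposability identities.
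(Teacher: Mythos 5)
Your proof is correct and takes essentially the same route as the paper's: the same basic inequality from optimality of $\widehat\Theta$, the same convex-combination dual-norm bound on the noise term (weights $\alpha/(\alpha+\beta)$ and $\beta/(\alpha+\beta)$, together with $\|\mathcal{G}_X^T W\|_\infty \le \|\mathcal{G}_X^T W\|_2$) under the same event $\frac{1}{N}\|\mathcal{G}_X^T W\|_2 \le \frac{\alpha+\beta}{2}$, and the same $\mathcal{S}_0$/$J(\Theta)$ decomposability cancellations. The only difference is packaging: the paper delegates the trace-norm and $\ell_1$ cancellations to Corollary~\ref{cor:sep-tracenorm} and Lemma~\ref{lem:sep-nonzero-entry} of the supplemental material and cites a known Gaussian operator-norm bound (Lemma~\ref{lem:event-A-hold}) where you re-derive it via Lipschitz concentration, which is precisely how that cited result is proved.
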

\begin{proof}
From the definition
of $\widehat \Theta$ in Eq.~(\ref{eq:def-minimizer-to-SDP}), we have $\widehat S(\widehat \Theta) + \alpha \|\widehat \Theta\|_* + \beta \| \widehat \Theta \|_1 \le \widehat S(\Theta) + \alpha \|\Theta\|_* + \beta \| \Theta \|_1$.
By substituting $Y = \mathcal{F} + W$ and
Eq.~(\ref{eq:error}) into the previous inequality, we have
\begin{eqnarray*}
\frac{1}{N} \| \mathcal{G}_X \widehat \Theta \hskip-0.02in - \hskip-0.02in \mathcal{F} \|_F^2 \hskip-0.02in \le \hskip-0.02in \frac{1}{N} \| \mathcal{G}_X \Theta \hskip-0.02in - \hskip-0.02in \mathcal{F} \|_F^2 \hskip-0.02in + \hskip-0.02in \frac{2}{N}\langle W,  \mathcal{G}_X ( \widehat \Theta \hskip-0.02in - \hskip-0.02in \Theta )
\rangle \hskip-0.02in + \hskip-0.02in \alpha \left( \hskip-0.02in \|\Theta\|_* \hskip-0.02in - \hskip-0.02in \|\widehat \Theta\|_* \hskip-0.02in \right) \hskip-0.02in + \hskip-0.02in \beta \left( \hskip-0.02in \|\Theta\|_1 \hskip-0.02in - \hskip-0.02in \|\widehat \Theta\|_1 \hskip-0.02in \right).
\end{eqnarray*}
Define the random event
\begin{equation} \label{eq:event-A}
\mathcal{A} = \left\{  \frac{1}{N} \| \mathcal{G}_X^T W \|_2  \le
\frac{\alpha + \beta}{2} \right\}.
\end{equation}
Taking $\alpha + \beta$ as the value in Eq.~(\ref{eq:def-lambda}), it
follows from Lemma~\ref{lem:event-A-hold} of the supplemental materia that $\mathcal{A}$ holds with probability of at least $1 - \exp \left(
- n t^2 / 2 \right)$. Therefore, we have
\begin{eqnarray*}
\hskip -0.1in &      & \hskip -0.1in \langle W,  \mathcal{G}_X ( \widehat \Theta - \Theta ) \rangle = \frac{\alpha + \beta }{\alpha + \beta} \langle W,  \mathcal{G}_X ( \widehat \Theta - \Theta ) \rangle \\
\hskip -0.1in &  \le & \hskip -0.1in \frac{\alpha}{\alpha + \beta} \| \mathcal{G}_X^T W \|_2  \| \widehat \Theta - \Theta \|_*  +  \frac{\beta}{\alpha + \beta} \| \mathcal{G}_X^T W \|_\infty  \| \widehat \Theta - \Theta \|_1 \le  \frac{N}{2} \left( \alpha \| \widehat \Theta - \Theta \|_* + \beta \| \widehat \Theta - \Theta \|_1 \right),
\end{eqnarray*}
where the second inequality follows from $\| \mathcal{G}_X^T W \|_2 \ge \| \mathcal{G}_X^T W \|_\infty$.
Therefore, under $\mathcal{A}$, we have
\begin{eqnarray*} \label{eq:lem-1-0}
\hskip -0.1in &     & \hskip -0.1in \frac{1}{N} \| \mathcal{G}_X  \widehat \Theta - \mathcal{F} \|_F^2 \\
\hskip -0.1in & \le & \hskip -0.1in \frac{1}{N} \| \mathcal{G}_X \Theta - \mathcal{F} \|_F^2 + \alpha \| \widehat \Theta - \Theta \|_* + \beta \| \widehat \Theta - \Theta \|_1 + \alpha \left( \|\Theta\|_* - \|\widehat \Theta\|_* \right) + \beta \left( \|\Theta\|_1 - \|\widehat \Theta\|_1 \right).
\end{eqnarray*}
From Corollary~\ref{cor:sep-tracenorm} and Lemma~\ref{lem:sep-nonzero-entry} of the supplemental material, we complete the proof.
\end{proof}

\subsection{Main Assumption}

We introduce a key assumption on the dictionary of basis functions $\mathcal{G}_X$. Based on such an assumption, we derive a performance bound for the sparse trace norm regularization formulation in Eq.~(\ref{eq:def-minimizer-to-SDP}).
\begin{assumption} \label{assump}
For a matrix pair $\Theta$ and
$\Delta$ of size $h \times k$, let $s \le \min(h, k)$ and $q \le h \times k$. We assume that there exist constants $\kappa_1 (s)$ and $\kappa_2 (q)$
such that
\begin{eqnarray} \label{eq:assump-1}
\kappa_1 (s) \triangleq \min_{\Delta \in \mathcal{R} (s, q)}
\frac{\|\mathcal{G}_X \Delta \|_F}{\sqrt{N} \|\mathcal{S}_0 (\Delta)
\|_*}
> 0,  \,\,
\kappa_2 (q)  \triangleq  \min_{\Delta \in \mathcal{R} (s, q)} \frac{ \| \mathcal{G}_X \Delta \|_F}{ \sqrt{N} \| \Delta_{J(\Theta)}\|_1 } > 0,
\end{eqnarray}
where the restricted set $\mathcal{R} (s, q)$ is defined as
\begin{eqnarray*}
\mathcal{R} (s, q) = \left\{ \Delta \in \mathbb{R}^{h \times k}, \Theta \in \mathbb{R}^{h \times k} \,
| \, \Delta \ne 0, \,\, \mbox{rank} (\mathcal{S}_0 (\Delta)) \le s,
\,\, \left|J(\Theta)\right| \le q \right\},
\end{eqnarray*}
and $| J(\Theta)|$ denotes the number of nonzero entries in the matrix $\Theta$.
\end{assumption}
Our assumption on $\kappa_1 (s)$ in Eq.~(\ref{eq:assump-1}) is closely related to but less restrictive than the RSC condition used in~\cite{Negahban-ICML10}; its denominator is only a part of the one in RSC and in a different matrix norm as well. Our assumption on $\kappa_2 (q)$ is similar to the RE condition used in~\cite{bickel-simulLassoDantzig-AnnalsStat09} except that its denominator is in a different matrix norm; our assumption can also be implied by sufficient conditions similar to the ones in~\cite{bickel-simulLassoDantzig-AnnalsStat09}.

\subsection{Performance Bound}

We derive a performance bound for the sparse trace norm structure obtained by solving Eq.~(\ref{eq:def-minimizer-to-SDP}). This bound measures how well the optimal $\widehat \Theta$ can be used to approximate $\mathcal{F}$  by evaluating the averaged estimation error, i.e., $\|\mathcal{G}_X \widehat \Theta - \mathcal{F}\|_F^2/N$.
\begin{thm} \label{thm:oracle}
Consider the optimization problem in
Eq.~(\ref{eq:def-minimizer-to-SDP}) for $h, k \ge 2$ and $n \ge 1$.
Given n sample pairs as $X = [x_1, \cdots, x_n]^T \in \mathbb{R}^{n
\times d}$ and $Y = [y_1, \cdots, y_n]^T \in \mathbb{R}^{n \times
k}$, let $\mathcal{F}$ and $\mathcal{G}_X$ be defined in
Eqs.~(\ref{eq:def-F}) and~(\ref{eq:def-G}), respectively; let
$\sigma_{X(l)}$ be the largest singular value of $\mathcal{G}_X$. Assume that $W \in
\mathbb{R}^{n \times k}$ has i.i.d. entries as $w_{ij} \sim
\mathcal{N} (0, \sigma_w^2 )$. Take $\alpha + \beta$ as the value in Eq.~(\ref{eq:def-lambda}).
Then with
probability of at least $1 - \exp \left( - n t^2 / 2
\right)$, for the minimizer $\widehat \Theta$ in
Eq.~(\ref{eq:def-minimizer-to-SDP}), we have
\begin{eqnarray}
\label{eq:oracle-ineq-result}
\frac{1}{N} \| \mathcal{G}_X \widehat \Theta - \mathcal{F} \|_F^2 \le  (1+\epsilon)  \inf_{\Theta} \left \{\frac{1}{N} \| \mathcal{G}_X \Theta -
\mathcal{F} \|_F^2 \right\} + \mathcal{E} (\epsilon) \left( \frac{\alpha^2}{\kappa_1^2 (2 r)} + \frac{\beta^2}{\kappa_2^2 (c)} \right),
\end{eqnarray}
where $\inf$ is taken over all $\Theta \in \mathbb{R}^{h \times k}$
with $\mbox{rank} (\Theta) \le r$ and $| J(\Theta) | \le c$, and $\mathcal{E} (\epsilon)
> 0$ is a constant depending only on $\epsilon$.
\end{thm}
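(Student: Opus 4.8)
The plan is to feed the basic inequality of Lemma~\ref{lem:basic} into the restricted-eigenvalue-type bounds of Assumption~\ref{assump}, and then resolve the resulting self-referential inequality in the prediction error $\|\mathcal{G}_X\widehat\Theta - \mathcal{F}\|_F$ by a Young-type argument. First I would fix an arbitrary oracle $\Theta$ with $\mbox{rank}(\Theta) \le r$ and $|J(\Theta)| \le c$, set $\Delta = \widehat\Theta - \Theta$, and dispose of the trivial case $\Delta = 0$. Conditioning on the event $\mathcal{A}$ of Eq.~(\ref{eq:event-A})---which by Lemma~\ref{lem:basic} has probability at least $1 - \exp(-nt^2/2)$---the bound Eq.~(\ref{eq:lem-1}) holds, giving
\[
\tfrac{1}{N}\|\mathcal{G}_X\widehat\Theta - \mathcal{F}\|_F^2 \le \tfrac{1}{N}\|\mathcal{G}_X\Theta - \mathcal{F}\|_F^2 + 2\alpha\|\mathcal{S}_0(\Delta)\|_* + 2\beta\|\Delta_{J(\Theta)}\|_1 .
\]

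Next I would verify that $\Delta \in \mathcal{R}(2r, c)$. The sparsity requirement $|J(\Theta)| \le c$ is precisely the oracle hypothesis, while the rank requirement $\mbox{rank}(\mathcal{S}_0(\Delta)) \le 2r$ follows from the definition of the operator $\mathcal{S}_0$ (Lemma~\ref{lem:sep-singular-value}), which projects onto the span of the leading $r$ left/right singular directions of $\Theta$ and therefore outputs a matrix of rank at most $2r$. With $\Delta$ confirmed to lie in $\mathcal{R}(2r,c)$, Assumption~\ref{assump} supplies the two lower bounds $\kappa_1(2r)\sqrt{N}\|\mathcal{S}_0(\Delta)\|_* \le \|\mathcal{G}_X\Delta\|_F$ and $\kappa_2(c)\sqrt{N}\|\Delta_{J(\Theta)}\|_1 \le \|\mathcal{G}_X\Delta\|_F$, which I substitute into the displayed inequality (noting that each regularizer term vanishes harmlessly when the corresponding seminorm of $\Delta$ is zero).

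Writing $a = \|\mathcal{G}_X\widehat\Theta - \mathcal{F}\|_F/\sqrt{N}$ and $b = \|\mathcal{G}_X\Theta - \mathcal{F}\|_F/\sqrt{N}$, and using the triangle inequality $\|\mathcal{G}_X\Delta\|_F/\sqrt{N} \le a + b$, the problem collapses to the scalar relation $a^2 \le b^2 + C(a+b)$ with $C = 2\alpha/\kappa_1(2r) + 2\beta/\kappa_2(c)$. I would then apply Young's inequality twice, $Ca \le \eta a^2 + C^2/(4\eta)$ and $Cb \le \eta b^2 + C^2/(4\eta)$ with $\eta = \epsilon/(2+\epsilon)$, to absorb the $a^2$ term on the left and turn the coefficient of $b^2$ into $(1+\eta)/(1-\eta) = 1+\epsilon$. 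Finally $(x+y)^2 \le 2(x^2+y^2)$ yields $C^2 \le 8\bigl(\alpha^2/\kappa_1^2(2r) + \beta^2/\kappa_2^2(c)\bigr)$, so the accumulated constants define $\mathcal{E}(\epsilon)$. Since $a$ is independent of $\Theta$, I take the infimum of the right-hand side over all admissible $\Theta$ to obtain Eq.~(\ref{eq:oracle-ineq-result}).

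The main obstacle is the bookkeeping around $\mathcal{S}_0$: one must confirm that $\mbox{rank}(\mathcal{S}_0(\widehat\Theta - \Theta)) \le 2r$ so that the pair $(\Delta,\Theta)$ genuinely lands in $\mathcal{R}(2r,c)$, and that this membership holds directly for the minimizer's residual $\widehat\Theta - \Theta$ rather than only on a restricted cone (as would require a separate cone argument in the standard Lasso or trace-norm analyses). Once that is settled, the conversion of the quadratic $a^2 \le b^2 + C(a+b)$ into the $(1+\epsilon)$ oracle form is routine, provided the Young's-inequality constants are tracked carefully.
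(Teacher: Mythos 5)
Your proposal is correct and follows essentially the same route as the paper's proof: start from the basic inequality Eq.~(\ref{eq:lem-1}), check that $\Delta = \widehat\Theta - \Theta$ (paired with the oracle $\Theta$) lies in $\mathcal{R}(2r,c)$, invoke Assumption~\ref{assump} together with the triangle inequality, and absorb the prediction-error terms via a Young-type inequality before taking the infimum over admissible $\Theta$. The only cosmetic difference is that the paper applies $2ab \le \tau a^2 + b^2/\tau$ (with $\tau = 2 + 4/\epsilon$) to the trace-norm and $\ell_1$-norm terms separately, whereas you bundle them into a single constant $C$ and split afterwards via $(x+y)^2 \le 2(x^2+y^2)$; both variants even produce the identical constant $\mathcal{E}(\epsilon) = 2(\epsilon+2)^2/\epsilon$.
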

\begin{proof}
Denote $\Delta = \widehat \Theta - \Theta$ in Eq.~(\ref{eq:lem-1}). We have
\begin{eqnarray} \label{eq:perform-step-1}
\hskip -0.1in \frac{1}{N} \| \mathcal{G}_X \widehat \Theta - \mathcal{F} \|_F^2  \le  \frac{1}{N} \| \mathcal{G}_X \Theta - \mathcal{F} \|_F^2 + 2 \alpha \| \mathcal{S}_0 (\Delta) \|_* + 2 \beta \| \Delta_{J(\Theta)}\|_1.
\end{eqnarray}
Given $\mathcal{S}_0 (\Delta) \le 2 r$ (from Lemma~\ref{lem:sep-singular-value} of the supplemental material) and $| J(\Theta) | \le c$, we derive upper bounds on the components $2 \alpha \| \mathcal{S}_0 (\Delta) \|_*$ and $2 \beta \| \Delta_{J(\Theta)}\|_1$ over the restrict set $\mathcal{R} (2 r, c)$ based on Assumptions~\ref{assump}, respectively. It follows that
\begin{eqnarray} \label{eq:bound-1}
2 \alpha \|\mathcal{S}_0 (\Delta) \|_*  & \le &  \frac{2 \alpha}{\kappa_1 (2 r) \sqrt{N}} \| \mathcal{G}_X (\widehat \Theta - \Theta) \|_F \le \frac{2 \alpha}{\kappa_1(2 r) \sqrt{N}} \left( \| \mathcal{G}_X \widehat \Theta - \mathcal{F} \|_F + \|
\mathcal{G}_X \Theta - \mathcal{F} \|_F  \right) \nonumber \\
& \le &   \frac{ \alpha^2 {\tau}}{ \kappa_1^2 (2 r)} + {{\frac{1}{{N \tau}}}} \| \mathcal{G}_X \widehat \Theta - \mathcal{F} \|_F^2 + \frac{ \alpha^2 {\tau}}{\kappa_1^2 (2 r)}
 + {{\frac{1}{{ N \tau}}}} \| \mathcal{G}_X \Theta
- \mathcal{F} \|_F^2,
\end{eqnarray}
where the last inequality above follows from $2 a b \le a^2 {\tau} + b^2 / {\tau}$ for $\tau > 0$.
Similarly, we have
\begin{eqnarray} \label{eq:bound-2}
2 \beta \| \Delta_{J(\Theta)} \|_1 & \le &   \frac{\beta^2 \tau}{ \kappa_2^2 (c)}
+ {{\frac{1}{N \tau}}} \| \mathcal{G}_X \widehat \Theta - \mathcal{F} \|_F^2 + \frac{\beta^2 \tau}{ \kappa_2^2 (c)}
+ {{\frac{1}{{N  \tau}}}} \| \mathcal{G}_X \Theta
- \mathcal{F} \|_F^2. 
\end{eqnarray}
Substituting Eqs.~(\ref{eq:bound-1}) and~(\ref{eq:bound-2}) into Eq.~(\ref{eq:perform-step-1}), we have
\begin{eqnarray*}
\frac{1}{N} \| \mathcal{G}_X \widehat \Theta - \mathcal{F} \|_F^2 & \le & \frac{\tau + 2}{(\tau - 2)N} \| \mathcal{G}_X \Theta - \mathcal{F} \|_F^2 + \frac{2 \tau^2}{\tau -2} \left( \frac{\alpha^2}{\kappa_1^2 (2 r)} + \frac{\beta^2}{\kappa_2^2 (c)} \right).
\end{eqnarray*}
Setting $\tau = 2 + 4 / \epsilon$ and $\mathcal{E} (\epsilon) = 2 (\epsilon + 2)^2 / \epsilon$ in the inequality above, we complete the proof.
\end{proof}
By choosing specific values for $\alpha$ and $\beta$, we can refine the performance bound described in Eq.~(\ref{eq:oracle-ineq-result}). 
It follows from Eq.~(\ref{eq:def-lambda}) we have
\begin{equation}
\min_{\alpha, \beta, \alpha + \beta = \gamma} \left( \frac{\alpha^2}{\kappa_1^2 (2 r)} + \frac{\beta^2}{\kappa_2^2 (c)} \right) =  \frac{\gamma^2}{\kappa^2_1 (2 r) + \kappa^2_2 (c)}, \,\, \gamma = \frac{2 \sigma_{X(l)} \sigma_w \sqrt{n}}{N} \left( 1 +
\sqrt{\frac{k}{n}} + t \right),
\end{equation}
where the equality of the first equation is achieved by setting $\alpha$ and $\beta$ proportional to $\kappa^2_1 (2 r)$ and $\kappa^2_2 (q)$, i.e., $\alpha = {\gamma \kappa^2_1(2 r)}/\left(\kappa^2_1(2 r) + \kappa^2_2(c)\right)$ and $\beta = \gamma \kappa^2_2(c) / \left( \kappa^2_1(2 r) + \kappa^2_2(c) \right)$. Thus the performance bound in Eq.~(\ref{eq:oracle-ineq-result}) can be refined as
\begin{eqnarray*}
\frac{1}{N} \| \mathcal{G}_X \widehat \Theta - \mathcal{F} \|_F^2 \le (1+\epsilon)  \inf_{\Theta} \left \{\frac{1}{N} \| \mathcal{G}_X \Theta -
\mathcal{F} \|_F^2 \right\} + \frac{4 \mathcal{E} (\epsilon) \sigma^2_{X(l)} \sigma_w^2 n}{N^2 \left(\kappa^2_1 (2 r) + \kappa^2_2 (c)\right) } \left( 1 +
\sqrt{\frac{k}{n}} + t \right)^2.
\end{eqnarray*}
Note that the performance bound above is independent of the value of $\alpha$ and $\beta$, and it is tighter than the one described in Eq.~(\ref{eq:oracle-ineq-result}).

\section{Experiments}

In this section, we evaluate the effectiveness of the sparse trace norm regularization formulation in Eq.~(\ref{eq:def-minimizer-to-SDP}) on benchmark data sets; we also conduct numerical studies on the convergence of AG and two ADMM implementations including ADMM$1$ and ADMM$2$ (see details in Section~E of the supplemental material) for solving Eq.~(\ref{eq:def-minimizer-to-SDP}) and the convergence of the alternating optimization algorithm for solve Eq.~(\ref{eq:two-block}). Note that we use the least square loss for the following experiments.

\noindent {\bf Performance Evaluation} We apply the sparse trace norm regularization formulation (S.TraceNorm) on multi-label classification problems, in comparison with the trace norm regularization formulation (TraceNorm) and the $\ell_1$-norm regularization formulation (OneNorm). AUC, Macro F$1$, and Micro F$1$ are used as the classification performance measures. Four benchmark data sets, including Business, Arts, and Health from Yahoo webpage data sets~\cite{Ueda-YahooData-KDD02} and Scene from LIBSVM multi-label data sets\footnote{\small \url{http://www.csie.ntu.edu.tw/~cjlin}}, are employed in this experiment. The reported experimental results are averaged over $10$ random repetitions of the data sets into training and test sets of the ratio $1:9$. We use the AG method to solve the S.TraceNorm formulation, and stop the iterative procedure of AG if the change of the objective values in two successive iterations is smaller than $10^{-8}$ or the iteration numbers larger than $10^5$. The regularization parameters $\alpha$ and $\beta$ are determined via double cross-validation from the set $\{10^{-2} \times i \}_{i=1}^{10} \cup \{10^{-1}
\times i \}_{i=2}^{10} \cup \{2 \times i\}_{i=1}^{10}$.
\begin{table*}[!t]
\caption{{\small Averaged performance (with standard derivation) comparison in terms of AUC, Macro F1, and Micro F1. Note that $n$, $d$, and $m$ denote the
sample size, dimensionality, and label number, respectively.}}
\centering
\begin{scriptsize}
\begin{tabular}{l|r|cccc}
\hline
\multicolumn{2}{c|}{Data Set}       & Business & Arts & Health & Scene   \\
\multicolumn{2}{c|}{(n, d, m)}      & {\scriptsize($9968, 16621, 17$)} & {\scriptsize($7441, 17973, 19$)} & {\scriptsize($9109, 18430, 14$)}  & {\scriptsize($2407, 294, 6$)} \\
\hline  \hline
          & S.TraceNorm & $85.42 \pm 0.31$  & $76.31 \pm 0.15$ & $86.18 \pm 0.56$ & $91.54 \pm 0.18$ \\
AUC       &    TraceNorm & $83.43 \pm 0.41$  & $75.90 \pm 0.27$ & $85.24 \pm 0.42$ & $90.33 \pm 0.24$ \\
          &      OneNorm & $81.95 \pm 0.26$  & $70.47 \pm 0.18$ & $83.60 \pm 0.32$ & $88.42 \pm 0.31$ \\
\hline
          & S.TraceNorm & $48.83 \pm 0.13$  & $32.83 \pm 0.25$ & $60.05 \pm 0.36$ & $51.65 \pm 0.33$ \\
Macro F$1$&    TraceNorm & $47.24 \pm 0.15$  & $31.90 \pm 0.31$ & $58.91 \pm 0.24$ & $50.59 \pm 0.08$ \\
          &      OneNorm & $46.28 \pm 0.25$  & $31.03 \pm 0.46$ & $58.01 \pm 0.18$ & $46.57 \pm 1.10$ \\
\hline
          & S.TraceNorm & $78.26 \pm 0.71$  & $42.91 \pm 0.27$ & $67.22 \pm 0.47$ & $52.83 \pm 0.35$ \\
Micro F$1$&    TraceNorm & $78.84 \pm 0.11$  & $42.08 \pm 0.11$ & $66.92 \pm 0.42$ & $52.06 \pm 0.49$ \\
          &      OneNorm & $78.16 \pm 0.17$  & $40.64 \pm 0.52$ & $66.37 \pm 0.19$ & $47.32 \pm 0.13$ \\
\hline
\end{tabular}
\end{scriptsize}
\label{tab:performance}
\vskip -0.1in
\end{table*}

We present the averaged performance of the
competing algorithms in Table~\ref{tab:performance}. The main observations are summarized as follows: (1) S.TraceNorm achieves the best performance on all benchmark data sets (except on Business data) in this experiment; this result demonstrates the effectiveness of the induced sparse low-rank structure for multi-label classification tasks; (2) TraceNorm outperforms OneNorm on all benchmark data sets; this result demonstrates the effectiveness of modeling a
shared low-rank structure for high-dimensional text and image data analysis.


\noindent {\bf Numerical Study} We study the practical convergence of AG and ADMM$2$ by solving Eq.~(\ref{eq:def-minimizer-to-SDP}) on Scene data. In our experiments, we observe that ADMM$1$ is much slower than ADMM$2$ and we thus only focus on ADMM$2$. Note that in AG, we set $\alpha = 1, \beta = 1$; in ADMM$2$, we set $\alpha = 1$, $\beta = 1$, $\rho_1 = \rho_2 = 10$. For other parameter settings, we observe similar trends.

In the first experiment, we compare AG and ADMM$2$ in term of the
practical convergence. We stop ADMM$2$ when the change of the
objective values in two successive iterations smaller than $10^{-4}$;
the attained objective value in ADMM$2$ is used as the stopping
criterion for AG, that is, we stop AG if the attained objective value
in AG is equal to or smaller than that objective value attained in
ADMM$2$. The convergence curves of ADMM$2$ and AG are presented in
the left plot of Figure~\ref{fig:convergence}. Clearly, we can
observe that AG converges much faster than ADMM$2$. In the second
experiment, we study the convergence of AG. We stop AG when the
change of the objective values in two successive iterations smaller
than $10^{-8}$. The convergence curves is presented in the middle
plot of Figure~\ref{fig:convergence}. We observe that AG converges
very fast, and its convergence speed is consistent with the
theoretical convergence analysis
in~\cite{Nesterov-IntrodConvexOpt-note98}.
\begin{figure*}[t]\centering
    \centering
    \includegraphics[width=1.85in,height=1in]{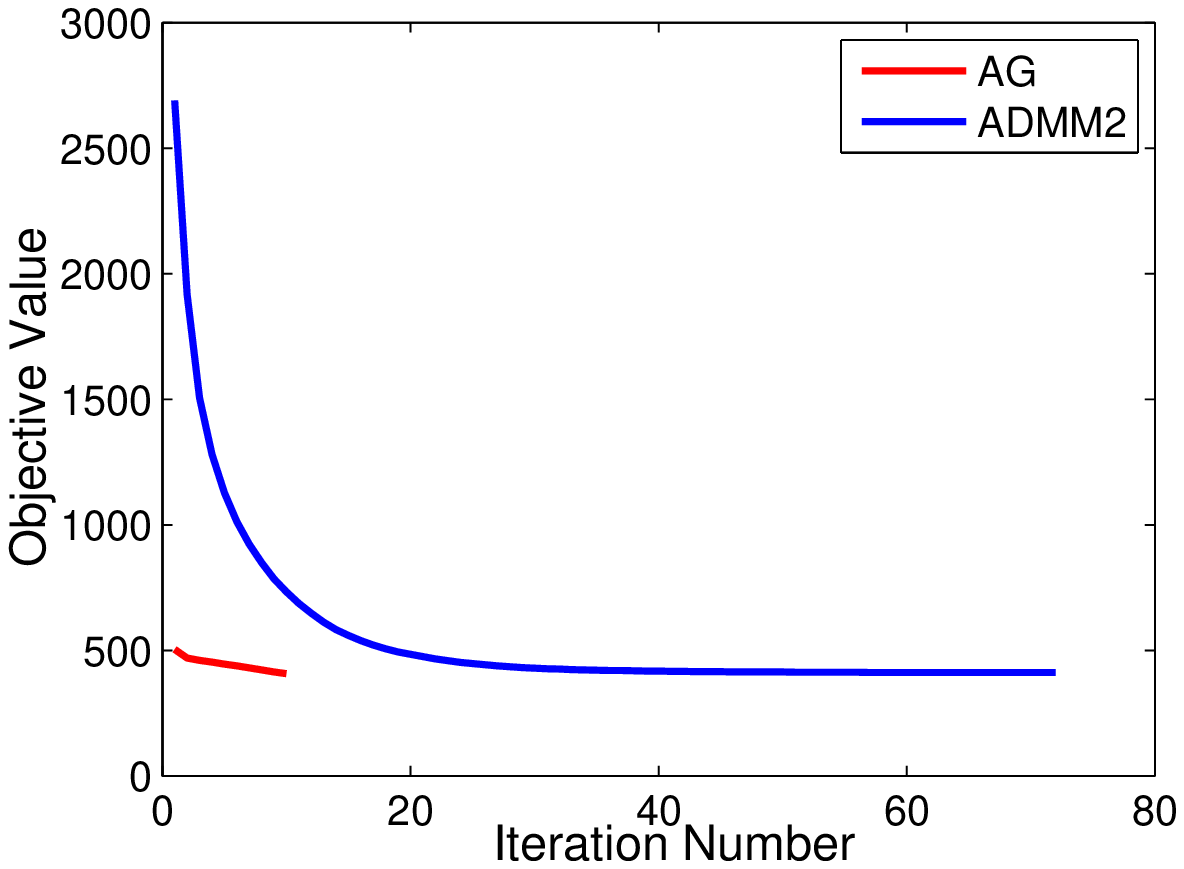}
    \hskip -0.1in
    \includegraphics[width=1.85in,height=1in]{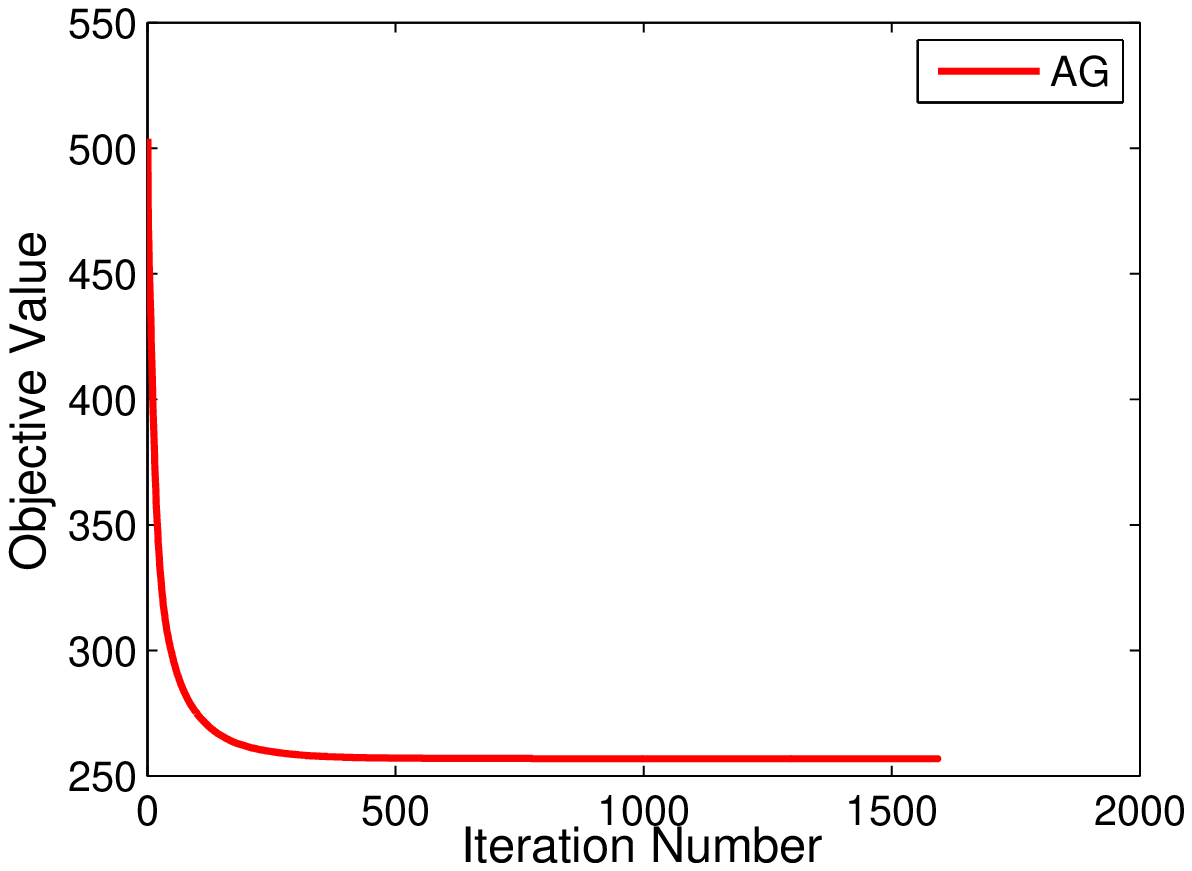}
    \hskip -0.1in
    \includegraphics[width=1.85in,height=1in]{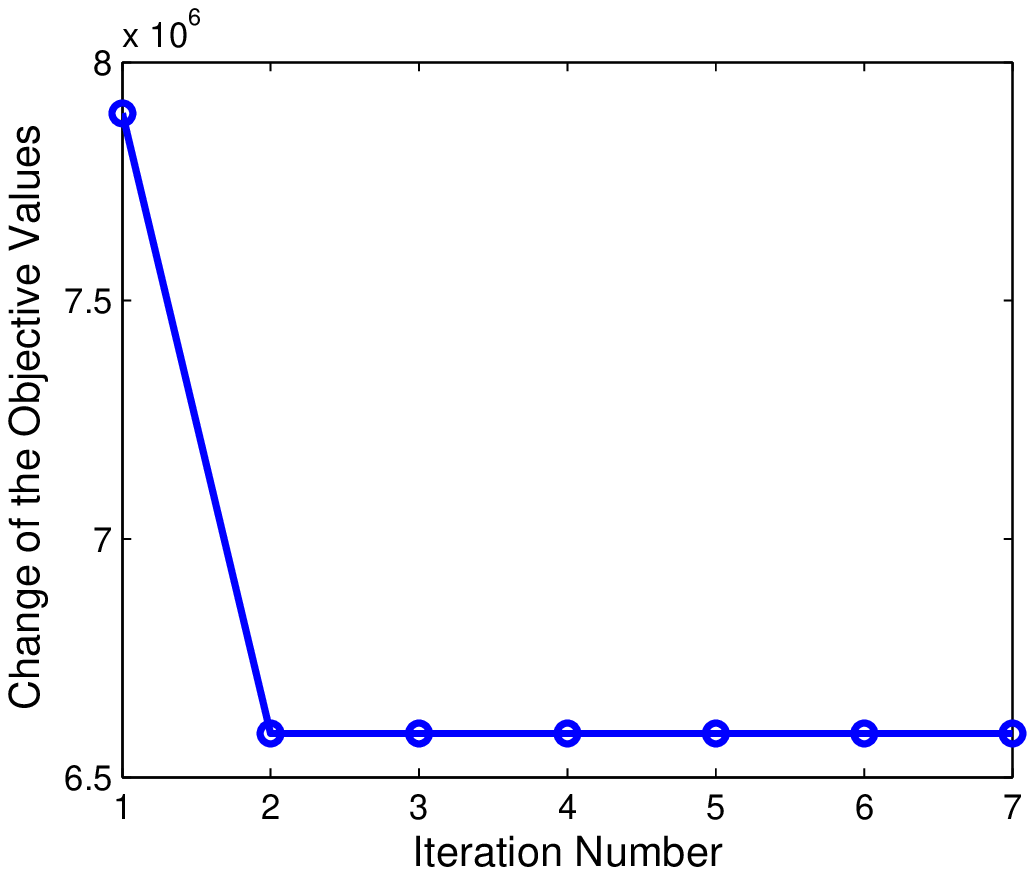}
    \caption{\small Convergence comparison of AG and ADMM$2$ for solving
Eq.~(\ref{eq:def-minimizer-to-SDP}) (left plot); convergence plot of
AG for solving Eq.~(\ref{eq:def-minimizer-to-SDP}) (middle plot); and
the alternating optimization algorithm for solving the dual
formulation of the proximal operator in Eq.~(\ref{eq:two-block})
(right plot).} \label{fig:convergence}
\vskip -0.1in
\end{figure*}


We also conduct numerical study on the alternating optimization
algorithm (in Section~\ref{subsec:alt-opt}) for solving the dual
formulation of the proximal operator in Eq.~(\ref{eq:two-block}).
Similarly, the alternating optimization algorithm is stopped when the
change of the objective values in two successive iterations smaller
than $10^{-8}$. For illustration, in Eq.~(\ref{eq:two-block}) we randomly generate the matrix $\widehat \Phi$ of size $10000$ by $5000$
from $\mathcal{N}(0,1)$; we then apply the alternating optimization
algorithm to solve Eq.~(\ref{eq:two-block}) and plot its convergence
curve in the right plot of Figure~\ref{fig:convergence}. Our
experimental results show that the alternating optimization algorithm
generally converges within $10$ iterations and our results
demonstrate the practical efficiency of this algorithm.


\section{Conclusion}

We study the problem of estimating multiple
predictive functions simultaneously in the nonparametric regression
setting. In our estimation scheme, each predictive function is
estimated using a linear combination of a dictionary of
pre-specified basis functions. By assuming that the coefficient
matrix admits a sparse low-rank structure, we formulate the function
estimation problem as a convex program with the trace norm and the $\ell_1$-norm
regularization. We propose to employ AG and ADMM algorithms to solve the function estimation problem and also develop efficient algorithms for the key components involved in AG and ADMM.  We derive a key property of the optimal solution to the convex program;
moreover, based on an assumption associated with the basis
functions, we establish a performance bound of the proposed function
estimation scheme using the composite regularization. Our
simulation studies demonstrate the effectiveness and the efficiency of the proposed formulation. In the future, we plan to derive a formal sparse oracle
inequality for the convex problem in
Eq.~(\ref{eq:def-minimizer-to-SDP}) as  in~\cite{bickel-simulLassoDantzig-AnnalsStat09}; we also plan to apply the proposed function estimation formulation to other real world applications.




{\lsp{0.88}
\small
\bibliography{SparseTraceNorm}
\bibliographystyle{plain}

\newpage
\setcounter{page}{1}

\begin{center}
\bf \Large Sparse Trace Norm Regularization: Supplemental Material
\end{center}

\section*{A. Operators $\mathcal{S}_0$ and $\mathcal{S}_1$}

We define two operators, namely $\mathcal{S}_0$ and $\mathcal{S}_1$,
on an arbitrary matrix pair (of the same size) based on Lemma $3.4$
in~\cite{Benjamin-trace-siam07}, as summarized in the following lemma.
\begin{appendLemma} \label{lem:sep-singular-value}
Given any $\Theta$ and $\Delta$ of size $h \times k$, let
$\mbox{rank} (\Theta) = r$ and denote the SVD of $\Theta$ as
\begin{equation*}
\Theta = U \left[
               \begin{array}{cc}
               \Sigma   & {\bf 0} \\
               {\bf 0}  & {\bf 0}
               \end{array}
         \right] V^T,
\end{equation*}
where $U \in \mathbb{R}^{h \times h}$ and $V \in \mathbb{R}^{k
\times k}$ are orthogonal, and $\Sigma \in \mathbb{R}^{r \times r}$
is diagonal consisting of the non-zero singular values on its main
diagonal. Let
\begin{equation*}
\widehat \Delta = U^T \Delta V = \left[
               \begin{array}{cc}
               {\widehat \Delta}_{11}   &   {\widehat \Delta}_{12} \\
               {\widehat \Delta}_{21}   &   {\widehat \Delta}_{22}
               \end{array}
         \right],
\end{equation*}
where ${\widehat \Delta}_{11} \in \mathbb{R}^{r \times r}$, ${\widehat \Delta}_{12} \in \mathbb{R}^{r \times (k-r)}$, ${\widehat \Delta}_{21}
\in \mathbb{R}^{(h-r) \times r}$, and ${\widehat \Delta}_{22}
\in \mathbb{R}^{(h-r) \times (k-r)}$. Define $\mathcal{S}_0$ and
$\mathcal{S}_1$ as
\begin{eqnarray*}
\mathcal{S}_0 (\Theta, \Delta)  =  U \left[
               \begin{array}{cc}
               {\widehat \Delta}_{11}   & {\widehat \Delta}_{12} \\
               {\widehat \Delta}_{21}  & {\bf 0}
               \end{array}
         \right] V^T, \,\,
\mathcal{S}_1 (\Theta, \Delta) = U \left[
               \begin{array}{cc}
               {\bf 0}   & {\bf 0} \\
               {\bf 0}   & {\widehat \Delta}_{22}
               \end{array}
         \right] V^T.
\end{eqnarray*}
Then the following conditions hold: $\mbox{rank} \left( \mathcal{S}_0 (\Theta, \Delta) \right) \le 2 r$, $\Theta \mathcal{S}_1 (\Theta, \Delta)^T = 0$, $\Theta^T \mathcal{S}_1 (\Theta, \Delta) = 0$.
\end{appendLemma}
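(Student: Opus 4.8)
The plan is to reduce all three assertions to elementary block-matrix computations carried out in the SVD coordinates, exploiting that $U$ and $V$ are orthogonal (hence invertible and rank-preserving) and that $\Theta$ has the block-diagonal form $U \,\mbox{diag}(\Sigma, {\bf 0})\, V^T$ in those coordinates. Because left-multiplication by $U$ and right-multiplication by $V^T$ preserve rank, it suffices throughout to manipulate the partitioned matrix $\widehat\Delta$ and the block factor $\mbox{diag}(\Sigma, {\bf 0})$ rather than $\Delta$ and $\Theta$ themselves.

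For the rank bound, I would first observe that
\[
\mbox{rank}\bigl(\mathcal{S}_0(\Theta,\Delta)\bigr) = \mbox{rank}\begin{bmatrix} \widehat\Delta_{11} & \widehat\Delta_{12} \\ \widehat\Delta_{21} & {\bf 0} \end{bmatrix},
\]
and then split the right-hand matrix additively as
\[
\begin{bmatrix} \widehat\Delta_{11} & \widehat\Delta_{12} \\ {\bf 0} & {\bf 0} \end{bmatrix} + \begin{bmatrix} {\bf 0} & {\bf 0} \\ \widehat\Delta_{21} & {\bf 0} \end{bmatrix}.
\]
The first summand has nonzero entries only in its top $r$ rows and the second only in its left $r$ columns, so each has rank at most $r$; subadditivity of rank then delivers the bound $2r$. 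This is the one place that demands a little care: the off-diagonal blocks $\widehat\Delta_{12}$ and $\widehat\Delta_{21}$ are exactly what prevents the sharper bound $r$, and the additive row/column decomposition is the clean device for accounting for them.

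For the two orthogonality identities I would substitute the block forms directly. Writing $\mathcal{S}_1(\Theta,\Delta)^T = V \,\mbox{diag}({\bf 0}, \widehat\Delta_{22}^T)\, U^T$ and using $V^T V = I$, the product $\Theta\, \mathcal{S}_1(\Theta,\Delta)^T$ collapses to $U \,\mbox{diag}(\Sigma, {\bf 0})\, \mbox{diag}({\bf 0}, \widehat\Delta_{22}^T)\, U^T$, and the product of the two middle block-diagonal factors vanishes because their nonzero blocks occupy complementary diagonal positions (top-left versus bottom-right). The companion identity $\Theta^T \mathcal{S}_1(\Theta,\Delta) = {\bf 0}$ follows by the symmetric computation, this time invoking $U^T U = I$.

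I do not expect any genuine obstacle here; the result is essentially a matrix-pair restatement of the construction in Lemma~3.4 of~\cite{Benjamin-trace-siam07}. The only steps warranting attention are confirming that zeroing the bottom-right block in $\mathcal{S}_0$ contributes rank at most $2r$ rather than $r$, and verifying that the complementary block supports of $\Theta$ and $\mathcal{S}_1$ force both mixed products to vanish; once the SVD coordinates are fixed, both follow from the basic rules of block multiplication.
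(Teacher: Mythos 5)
Your proof is correct and complete: the rank bound via the row/column additive split and subadditivity of rank, and the vanishing of the mixed products via block multiplication in the SVD coordinates, are exactly the right arguments. Note that the paper itself offers no proof of this lemma---it simply states it as a repackaging of Lemma~3.4 of~\cite{Benjamin-trace-siam07}---and your argument is essentially the standard one given in that reference, so your self-contained verification matches the intended (implicit) proof.
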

The result presented in Lemma~\ref{lem:sep-singular-value} implies a condition
under which the trace norm on a matrix pair is additive. From
Lemma~\ref{lem:sep-singular-value} we can easily verify that
\begin{equation} \label{eq:separate-single-values}
\|\Theta + \mathcal{S}_1 ({\Theta, \Delta})\|_* =
\|\Theta\|_* + \|\mathcal{S}_1
(\Theta, \Delta)\|_*,
\end{equation}
for arbitrary $\Theta$ and $\Delta$ of the same size. To avoid clutter notation, we denote $\mathcal{S}_0 (\Theta, \Delta)$ by $\mathcal{S}_0 (\Delta)$, and $\mathcal{S}_1 (\Theta,
\Delta)$ by $\mathcal{S}_1 (\Delta)$ throughout this paper, as the appropriate
$\Theta$ can be easily determined from the context.

\section*{B. Bound on Trace Norm}

As a consequence of Lemma~\ref{lem:sep-singular-value}, we derive a bound on the trace norm of the matrices of interest as summarized below.
\begin{appendCorollary} \label{cor:sep-tracenorm}
Given an arbitrary matrix pair $\widehat \Theta$ and $\Theta$, let $\Delta = \widehat \Theta - \Theta$. Then
\begin{eqnarray*}
\|\widehat \Theta - \Theta \|_* +  \|\Theta\|_* - \|\widehat \Theta\|_* \le 2 \|\mathcal{S}_0 (\Delta)\|_*.
\end{eqnarray*}
\end{appendCorollary}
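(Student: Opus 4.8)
The plan is to reduce everything to two facts supplied by Lemma~\ref{lem:sep-singular-value}: the decomposition $\Delta = \mathcal{S}_0(\Delta) + \mathcal{S}_1(\Delta)$, and the trace-norm additivity identity of Eq.~(\ref{eq:separate-single-values}). The decomposition is immediate from the block definitions, since $\mathcal{S}_0$ and $\mathcal{S}_1$ partition the four blocks of $\widehat\Delta = U^T \Delta V$ and reassemble to $U \widehat\Delta V^T = \Delta$. The only structural ingredient is the identity $\|\Theta + \mathcal{S}_1(\Delta)\|_* = \|\Theta\|_* + \|\mathcal{S}_1(\Delta)\|_*$, which holds because $\mathcal{S}_1(\Delta)$ shares neither row nor column space with $\Theta$ (the orthogonality relations $\Theta\,\mathcal{S}_1(\Delta)^T = 0$ and $\Theta^T \mathcal{S}_1(\Delta) = 0$).

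First I would lower-bound $\|\widehat\Theta\|_*$. Writing $\widehat\Theta = \Theta + \Delta = (\Theta + \mathcal{S}_1(\Delta)) + \mathcal{S}_0(\Delta)$, the reverse triangle inequality gives $\|\widehat\Theta\|_* \ge \|\Theta + \mathcal{S}_1(\Delta)\|_* - \|\mathcal{S}_0(\Delta)\|_*$, and applying Eq.~(\ref{eq:separate-single-values}) to the first term yields
\begin{equation*}
\|\widehat\Theta\|_* \ge \|\Theta\|_* + \|\mathcal{S}_1(\Delta)\|_* - \|\mathcal{S}_0(\Delta)\|_*.
\end{equation*}
Rearranging, $\|\Theta\|_* - \|\widehat\Theta\|_* \le \|\mathcal{S}_0(\Delta)\|_* - \|\mathcal{S}_1(\Delta)\|_*$.

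Next I would add $\|\Delta\|_*$ to both sides and control it by the ordinary triangle inequality, $\|\Delta\|_* = \|\mathcal{S}_0(\Delta) + \mathcal{S}_1(\Delta)\|_* \le \|\mathcal{S}_0(\Delta)\|_* + \|\mathcal{S}_1(\Delta)\|_*$. Combining the two estimates,
\begin{equation*}
\|\Delta\|_* + \|\Theta\|_* - \|\widehat\Theta\|_* \le \left( \|\mathcal{S}_0(\Delta)\|_* + \|\mathcal{S}_1(\Delta)\|_* \right) + \left( \|\mathcal{S}_0(\Delta)\|_* - \|\mathcal{S}_1(\Delta)\|_* \right) = 2\,\|\mathcal{S}_0(\Delta)\|_*,
\end{equation*}
which is exactly the claimed bound, recalling $\Delta = \widehat\Theta - \Theta$.

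There is no genuinely hard obstacle once Lemma~\ref{lem:sep-singular-value} is available; the only step requiring care is the choice of grouping. The key design decision is to pair $\Theta$ with $\mathcal{S}_1(\Delta)$ rather than with $\mathcal{S}_0(\Delta)$, so that the additivity identity applies and the $\|\mathcal{S}_1(\Delta)\|_*$ contributions cancel exactly between the lower bound on $\|\widehat\Theta\|_*$ and the triangle-inequality bound on $\|\Delta\|_*$. Any other grouping would leave a residual $\|\mathcal{S}_1(\Delta)\|_*$ term and fail to produce the clean factor of $2$ in front of $\|\mathcal{S}_0(\Delta)\|_*$.
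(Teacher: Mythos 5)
Your proof is correct and follows essentially the same route as the paper's: both decompose $\Delta = \mathcal{S}_0(\Delta) + \mathcal{S}_1(\Delta)$, lower-bound $\|\widehat\Theta\|_*$ via the triangle inequality applied to the grouping $(\Theta + \mathcal{S}_1(\Delta)) + \mathcal{S}_0(\Delta)$ together with the additivity identity of Eq.~(\ref{eq:separate-single-values}), and then control $\|\Delta\|_*$ by $\|\mathcal{S}_0(\Delta)\|_* + \|\mathcal{S}_1(\Delta)\|_*$ so the $\|\mathcal{S}_1(\Delta)\|_*$ terms cancel. The only difference is expository ordering, so there is nothing to add.
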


\begin{proof}
From Lemma~\ref{lem:sep-singular-value} we have $\Delta = \mathcal{S}_0 (\Delta) + \mathcal{S}_1 (\Delta)$ for the matrix pair $\Theta$ and $\Delta$. Moreover,
\begin{eqnarray} \label{eq:separate-theta-hat-1}
\|\widehat \Theta\|_* 
&  =   &  \|\Theta + \mathcal{S}_0 (\Delta) + \mathcal{S}_1 (\Delta) \|_*  \ge  \|\Theta + \mathcal{S}_1 (\Delta) \|_* - \| \mathcal{S}_0 (\Delta) \|_* \nonumber \\
&  =   &  \|\Theta \|_* + \|\mathcal{S}_1
(\Delta) \|_*  - \| \mathcal{S}_0 (\Delta)
\|_*,
\end{eqnarray}
where the inequality above follows from the triangle inequality and
the last equality above follows from
Eq.~(\ref{eq:separate-single-values}). Using the result in
Eq.~(\ref{eq:separate-theta-hat-1}), we have
\begin{eqnarray} \label{eq:TraceNorm-Ineq}
\|\widehat \Theta - \Theta \|_* +  \|\Theta\|_* - \|\widehat \Theta\|_* & \le &   \|\Delta\|_* +
\|\Theta\|_* -
\|\Theta \|_* - \|\mathcal{S}_1 (\Delta)
\|_*  + \| \mathcal{S}_0 (\Delta)
\|_*  \nonumber \\
& \le & 2 \|\mathcal{S}_0 (\Delta)\|_*. \nonumber
\end{eqnarray}
We complete the proof of this corollary.
\end{proof}

\section*{C. Bound on $\ell_1$-norm}

Analogous to the bound on the trace norm in Corollary~\ref{cor:sep-tracenorm}, we also derive a bound on the $\ell_1$-norm of the matrices of interest in the following lemma. For arbitrary matrices $\Theta$ and $\Delta$, we denote by $J (\Theta) =\{(i,j) \}$ the coordinate set (the location set of nonzero entries) of $\Theta$, and by $J(\Theta)_\bot$ the associated complement (the location set of zero entries); we denote by $\Delta_{J(\Theta)}$ the matrix of the same entries as $\Delta$ on the set $J(\Theta)$ and of zero entries on the set $J(\Theta)_\bot$. We now present a result associated with $J(\Theta)$ and $J(\Theta)_\bot$ in the following lemma. Note that a similar result for the vector case is presented in~\cite{bickel-simulLassoDantzig-AnnalsStat09}.
\begin{appendLemma} \label{lem:sep-nonzero-entry}
Given a matrix pair $\widehat \Theta$ and $\Theta$ of the same size, the inequality below always holds
\begin{equation} \label{eq:separate-one-norm}
\| \widehat \Theta - \Theta \|_1 + \| \Theta \|_1 - \| \widehat \Theta \|_1 \le 2 \| \widehat \Theta_{J(\Theta)} - \Theta_{J(\Theta)} \|_1.
\end{equation}
\end{appendLemma}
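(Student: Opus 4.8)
The plan is to exploit the entrywise separability of the $\ell_1$-norm together with the fact that $\Theta$ vanishes off its support $J(\Theta)$. Write $\Delta = \widehat\Theta - \Theta$ and split each $\ell_1$-norm into a contribution on the coordinate set $J(\Theta)$ and a contribution on its complement $J(\Theta)_\bot$. The key structural observation is that $\Theta_{J(\Theta)_\bot} = 0$, so on the complement $\widehat\Theta$ and $\Delta$ coincide, i.e. $\widehat\Theta_{J(\Theta)_\bot} = \Delta_{J(\Theta)_\bot}$.

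First I would record the three decompositions
\[
\|\widehat\Theta\|_1 = \|\widehat\Theta_{J(\Theta)}\|_1 + \|\Delta_{J(\Theta)_\bot}\|_1, \quad \|\Theta\|_1 = \|\Theta_{J(\Theta)}\|_1, \quad \|\Delta\|_1 = \|\Delta_{J(\Theta)}\|_1 + \|\Delta_{J(\Theta)_\bot}\|_1,
\]
where the first identity uses $\widehat\Theta_{J(\Theta)_\bot} = \Delta_{J(\Theta)_\bot}$ and the second uses that $\Theta$ is supported on $J(\Theta)$. Substituting these into the left-hand side $\|\Delta\|_1 + \|\Theta\|_1 - \|\widehat\Theta\|_1$, the two occurrences of $\|\Delta_{J(\Theta)_\bot}\|_1$ cancel, leaving $\|\Delta_{J(\Theta)}\|_1 + \|\Theta_{J(\Theta)}\|_1 - \|\widehat\Theta_{J(\Theta)}\|_1$.

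The final step is the reverse triangle inequality restricted to $J(\Theta)$: since $\widehat\Theta_{J(\Theta)} = \Theta_{J(\Theta)} + \Delta_{J(\Theta)}$, we have $\|\Theta_{J(\Theta)}\|_1 - \|\widehat\Theta_{J(\Theta)}\|_1 \le \|\Delta_{J(\Theta)}\|_1$, which yields the claimed bound $2\|\Delta_{J(\Theta)}\|_1 = 2\|\widehat\Theta_{J(\Theta)} - \Theta_{J(\Theta)}\|_1$.

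There is no serious obstacle here; the only thing to be careful about is the bookkeeping of which terms are supported where, and making sure the cancellation on $J(\Theta)_\bot$ is carried out \emph{before} the reverse triangle inequality is applied (applying it prematurely would lose the exact cancellation). This argument mirrors exactly the trace-norm bound in Corollary~\ref{cor:sep-tracenorm}, with the role played there by the additivity identity in Eq.~(\ref{eq:separate-single-values}) now taken by the trivial entrywise separability of the $\ell_1$-norm.
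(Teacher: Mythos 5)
Your proposal is correct and follows essentially the same route as the paper's own proof: decompose each $\ell_1$-norm over $J(\Theta)$ and $J(\Theta)_\bot$, use $\Theta_{J(\Theta)_\bot} = \mathbf{0}$ to cancel the off-support terms (since $\widehat\Theta_{J(\Theta)_\bot} = (\widehat\Theta-\Theta)_{J(\Theta)_\bot}$), and finish with the reverse triangle inequality restricted to $J(\Theta)$. Your write-up is in fact cleaner than the paper's, which states the off-support cancellation with a typo (writing $\| (\widehat \Theta - \Theta)_{J(\Theta)_\bot} \|_1 - \| \widehat \Theta_{J(\Theta)} \|_1 = 0$ where the second subscript should be $J(\Theta)_\bot$), but the underlying argument is identical.
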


\begin{proof}
It can be verified that the inequality
\begin{equation*}
\| \Theta_{J(\Theta)} \|_1 - \| \widehat \Theta_{J(\Theta)} \|_1 \le \| (\widehat \Theta - \Theta)_{J(\Theta)} \|_1
\end{equation*}
and the equalities
\begin{equation*}
\Theta_{J(\Theta)_{\bot}} = {\bf 0}, \,\,\, \| (\widehat \Theta - \Theta)_{J(\Theta)_\bot} \|_1 - \| \widehat \Theta_{J(\Theta)} \|_1 = {\bf 0}
\end{equation*}
hold. Therefore we can derive
\begin{eqnarray*}
&   & \| \widehat \Theta - \Theta \|_1 + \| \Theta \|_1 - \| \widehat \Theta \|_1  \\
& = & \| (\widehat \Theta - \Theta)_{J(\Theta)} \|_1 + \| (\widehat \Theta - \Theta)_{J(\Theta)_\bot} \|_1  +  \| \Theta_{J(\Theta)} \|_1 + \| \Theta_{J(\Theta)_\bot} \|_1 - \| \widehat \Theta_{J(\Theta)} \|_1  - \| \widehat \Theta_{J(\Theta)_\bot} \|_1  \\
& \le & 2 \| (\widehat \Theta - \Theta)_{J(\Theta)} \|_1.
\end{eqnarray*}
This completes the proof of this lemma.
\end{proof}

\section*{D. Concentration Inequality}

\begin{appendLemma} \label{lem:event-A-hold}
Let $\sigma_{\scriptsize{X(l)}}$ be the maximum singular value of
the matrix $\mathcal{G}_X \in \mathbb{R}^{n \times h}$; let $W \in \mathbb{R}^{n \times k}$ be the matrix of
i.i.d entries as $w_{ij} \sim \mathcal{N} (0, \sigma_w^2 )$. Let
$
\lambda = {2 \sigma_{\scriptsize{X(l)}} \sigma_w \sqrt{n}} \left( 1 +
\sqrt{{k}/{n}} + t \right) / {N}.
$
Then
\begin{equation*}
\Pr \left( \|W^T \mathcal{G}_X \|_2 / N \le
{\lambda}/{2} \right) \ge 1 - \exp\left(- n t^2 / 2
\right).
\end{equation*}
\end{appendLemma}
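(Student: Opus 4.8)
The plan is to reduce the bound on the spectral norm of the $k \times h$ matrix $W^T \mathcal{G}_X$ to a concentration statement for the spectral norm of the Gaussian matrix $W$ alone, and then to invoke a standard deviation inequality for the largest singular value of a Gaussian random matrix.

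First I would use submultiplicativity of the operator norm to separate the deterministic factor from the random one:
\[
\|W^T \mathcal{G}_X\|_2 \le \|W^T\|_2 \, \|\mathcal{G}_X\|_2 = \sigma_{X(l)} \, \|W\|_2 ,
\]
since $\|W^T\|_2 = \|W\|_2$ and $\|\mathcal{G}_X\|_2 = \sigma_{X(l)}$ by definition. This isolates all the randomness in the single quantity $\|W\|_2$, the largest singular value of the $n \times k$ matrix $W$.

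Next, writing $W = \sigma_w G$ where $G \in \mathbb{R}^{n \times k}$ has i.i.d.\ $\mathcal{N}(0,1)$ entries, I would apply the concentration inequality for the operator norm of a standard Gaussian matrix. This rests on two classical facts: Gordon's comparison inequality gives $\mathbb{E}\|G\|_2 \le \sqrt{n} + \sqrt{k}$, and since $G \mapsto \|G\|_2$ is $1$-Lipschitz with respect to the Frobenius norm (viewing $G$ as a standard Gaussian vector in $\mathbb{R}^{nk}$), Gaussian concentration of measure yields $\Pr(\|G\|_2 > \mathbb{E}\|G\|_2 + s) \le \exp(-s^2/2)$ for every $s > 0$. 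Combining the two gives $\Pr(\|G\|_2 > \sqrt{n} + \sqrt{k} + s) \le \exp(-s^2/2)$.

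Finally I would choose the deviation parameter to match the prescribed probability: setting $s = t\sqrt{n}$ turns the tail bound into $\exp(-n t^2/2)$ and rewrites the threshold as $\sqrt{n}\,(1 + \sqrt{k/n} + t)$. Hence with probability at least $1 - \exp(-n t^2/2)$ we have $\|W\|_2 \le \sigma_w \sqrt{n}\,(1 + \sqrt{k/n} + t)$, and substituting back through the submultiplicative bound and dividing by $N$ reproduces exactly $\|W^T \mathcal{G}_X\|_2 / N \le \lambda/2$. The only nontrivial ingredient is the Gaussian matrix deviation inequality; everything else is bookkeeping. I do not expect a genuine obstacle here, so the main point of care is simply confirming that the crude submultiplicative split $\|W^T \mathcal{G}_X\|_2 \le \sigma_{X(l)}\|W\|_2$ is tight enough to recover the stated constant, which it is, because $\sigma_{X(l)}$ enters linearly and multiplies the Gaussian bound exactly.
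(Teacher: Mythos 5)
Your proof is correct and takes essentially the same route as the paper's: both first isolate the randomness via the submultiplicative bound $\|W^T \mathcal{G}_X\|_2 \le \sigma_{X(l)} \|W\|_2$ (the paper derives this explicitly through a unit vector $b$ attaining the norm), and then apply a deviation inequality for the largest singular value of a Gaussian matrix with the rescaling $s = t\sqrt{n}$. The only difference is that the paper invokes this deviation inequality as a black box (Szarek's bound for matrices with $\mathcal{N}(0,1/n)$ entries), whereas you re-derive it from Gordon's inequality $\mathbb{E}\|G\|_2 \le \sqrt{n} + \sqrt{k}$ together with Gaussian Lipschitz concentration --- a valid, self-contained substitute that, incidentally, does not need the restriction $n \ge k$ required by the cited form.
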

\begin{proof}
It is known~\cite{Szarek-random-matrix-journal-compleixity90} that a Gaussian matrix $\widehat W \in \mathbb{R}^{n \times k}$ with $n
\ge k$ and $\hat w_{ij} \sim \mathcal{N} (0, {1}/{n})$ satisfies
\begin{equation} \label{eq:concen-ineq}
\mbox{Pr} \left( \|\widehat W\|_2 > 1 + \sqrt{{k}/{n}} + t \right)
\le \exp \left(- n t^2 / 2 \right),
\end{equation}
where $t$ is a universal constant. From the definition of the
largest singular value, there exist a vector $b \in \mathbb{R}^h$ of length $1$, i.e., $\|b\|_2 = 1$, such that $\|W^T \mathcal{G}_X\|_2 = \|W^T \mathcal{G}_X b\|_2 \le \| W \|_2
\|\mathcal{G}_X b\|_2 \le \sigma_{X(l)}\|W\|_2$. Since $w_{ij} / \left( \sigma_w \sqrt{n} \right) \sim \mathcal{N} (0, 1 / n)$, we have
\begin{eqnarray*}
\mbox{Pr} \left( \left\|W^T \mathcal{G}_X \right\|_2 / N
> {\lambda}/{2} \right)
 \le  \mbox{Pr} \left( {\sigma_{\scriptsize{X(l)}}}
\left\| W \right\|_2 / {N } > {\lambda}/{2} \right).
\end{eqnarray*}
Applying the result in Eq.~(\ref{eq:concen-ineq}) into the inequality above, we complete the proof of this lemma.
\end{proof}

\section*{E. Implementations of the Alternating Direction Method of Multipliers for Solving Eq.~(\ref{eq:def-minimizer-to-SDP})}

We employ two variants of the Alternating Direction Method of Multipliers~(ADMM) to solve the Eq.~(\ref{eq:def-minimizer-to-SDP}). The key difference lies in the use of different numbers of auxiliary variables to separate the smooth components from the non-smooth components of the objective function in Eq.~(\ref{eq:def-minimizer-to-SDP}).

\subsection*{E.1 The First Implementation: ADMM$1$}
By adding an auxiliary variable $\Psi$, we reformulate Eq.~(\ref{eq:def-minimizer-to-SDP}) as
\begin{eqnarray} \label{eq:reform-SpTrNorm}
\min_{\Theta,\Psi}  &&  \widehat S(\Theta) +  \alpha \|\Psi\|_* + \beta \|\Theta\|_1 \nonumber \\
\mbox{subject to}   &&  \Theta = \Psi.
\end{eqnarray}
The augmented Lagrangian of Eq.~(\ref{eq:reform-SpTrNorm}) can be expressed as
\begin{equation} \label{eq:reform-SpTrNorm-Lag}
\mathcal{L}_{\rho}^1 (\Theta, \Psi, \Gamma) = \widehat S(\Theta) +  \alpha \|\Psi\|_* + \beta \|\Theta\|_1 + \langle \Theta - \Psi, \Gamma \rangle + \frac{\rho}{2} \| \Theta - \Psi \|_F^2.
\end{equation}
To solve Eq.~(\ref{eq:reform-SpTrNorm}), ADMM$1$ consists of the following iterations:
\begin{eqnarray}
\Theta_{k+1} & = & \arg \min_\Theta \mathcal{L}_{\rho}^1 (\Theta, \Psi_k, \Gamma_k),  \label{eq:alg1-update-Theta}\\
\Psi_{k+1}   & = & \arg \min_{\Psi} \mathcal{L}_\rho^1 (\Theta_{k+1}, \Psi, \Gamma_k), \label{eq:alg1-update-Psi} \\
\Gamma_{k+1} & = & \Gamma_k + \rho \left( \Theta_{k+1} - \Psi_{k+1} \right), \label{eq:alg1-update-Gamma}
\end{eqnarray}
where $\Theta_k$, $\Psi_k$, and $\Gamma_k$ denote the intermediate solutions of ADMM$1$ at the $k$-th iteration, and $\rho$ is a pre-specified constant.

Specifically, if we employ the least squares loss, i.e., $\widehat
S(\Theta) = \| \mathcal{G}_X \Theta - Y \|_F^2 / N$, the
optimization problems in Eqs.~(\ref{eq:alg1-update-Theta})
and~(\ref{eq:alg1-update-Gamma}) can be efficiently solved as below.

{\bf Update on $\Theta$} The optimal $\Theta_{k+1}$ to Eq.~(\ref{eq:alg1-update-Theta}) can be obtained via
\begin{eqnarray}
\Theta_{k+1} = \arg \min_\Theta  \left( \frac{1}{N} \| \mathcal{G}_X \Theta - Y \|_F^2 + \beta \|\Theta\|_1 + \langle \Theta, \Gamma_k \rangle + \frac{\rho}{2} \| \Theta - \Psi_k \|_F^2 \right),
\end{eqnarray}
which can be efficiently solved via the gradient-type methods~\cite{Beck-fast-09,Nesterov:2007}.

{\bf Update on $\Psi$} The optimal $\Psi_{k+1}$ to Eq.~(\ref{eq:alg1-update-Psi}) can be obtained via
\begin{eqnarray}
\Psi_{k+1} = \arg \min_{\Psi} \left( \alpha \|\Psi\|_* - \langle \Psi, \Gamma_k \rangle + \frac{\rho}{2} \| \Theta_{k+1} - \Psi \|_F^2 \right). \nonumber
\end{eqnarray}
The optimization problem above admits an analytical solution~\cite{Benjamin-trace-siam07}. Assume $\mbox{rank} \left( \Theta_{k+1} + \Gamma_k / {\rho} \right) = r$. Let $\Theta_{k+1} + \Gamma_k / {\rho} = U_r \Sigma_r V_r^T$ be the singular value decomposition of $\Theta_{k+1} + \Gamma_k / {\rho}$, where $U_r$ and $V_r$ consist of respectively $r$ orthonormal columns, and $\Sigma_r = \mbox{diag} \left\{ (\sigma_1, \sigma_2, \cdots, \sigma_r) \right\}$. Then the optimal $\Psi_{k+1}$ is given by
\begin{equation}
\Psi_{k+1} = U_r \hat \Sigma V_r^T, \,\, \hat \Sigma = \mbox{diag} \left\{ \left( \sigma_i - \frac{\alpha}{\rho} \right)_+ \right\},
\end{equation}
where $(x)_+ = x$ if $x > 0$ and $(x)_+ = 0$ otherwise.

\subsection*{E.2 The Second Implementation: ADMM$2$} \label{sec:ADMM2}

By adding two auxiliary variables $\Psi^1$ and $\Psi^2$, we reformulate Eq.~(\ref{eq:def-minimizer-to-SDP}) as
\begin{eqnarray} \label{eq:reform-SpTrNorm-2}
\min_{\Theta,\Psi^1, \Psi^2}  &&  \widehat S(\Theta) +  \alpha \|\Psi^1\|_* + \beta \|\Psi^2\|_1 \nonumber \\
\mbox{subject to}   &&  \Theta = \Psi^1, \,\, \Theta = \Psi^2.
\end{eqnarray}
Similarly, the augmented Lagrangian of Eq.~(\ref{eq:reform-SpTrNorm-2}) can be expressed as
\begin{eqnarray*} 
&     &  \mathcal{L}_{\rho_1, \rho_2}^2 (\Theta, \Psi^1, \Psi^2, \Gamma^1, \Gamma^2) \\
& =   &  \widehat S(\Theta) +  \alpha \|\Psi^1\|_* + \beta \|\Psi^2\|_1 + \langle \Theta - \Psi^1, \Gamma^1 \rangle + \langle \Theta - \Psi^2, \Gamma_2 \rangle + \frac{\rho_1}{2} \| \Theta - \Psi^1 \|_F^2 + \frac{\rho_2}{2} \| \Theta - \Psi^2 \|_F^2.
\end{eqnarray*}
To solve Eq.~(\ref{eq:reform-SpTrNorm-2}), ADMM$2$ consists of the following iterations:
\begin{eqnarray}
\Theta_{k+1} & = & \arg \min_\Theta \mathcal{L}_{\rho_1, \rho_2}^2 (\Theta, \Psi_k^1, \Psi_k^2, \Gamma_k^1, \Gamma_k^2),  \label{eq:alg2-update-Theta}\\
\left( \Psi_{k+1}^1, \Psi_{k+1}^2 \right)   & = & \arg \min_{\Psi^1, \Psi^2} \mathcal{L}_{\rho_1, \rho_2}^2 (\Theta_{k+1}, \Psi^1, \Psi^2, \Gamma_k^1, \Gamma_k^2), \label{eq:alg2-update-Psi} \\
\Gamma_{k+1}^1 & = & \Gamma_k^1 + \rho_1 \left( \Theta_{k+1} - \Psi_{k+1}^1 \right), \label{eq:alg2-update-Gamma-1} \\
\Gamma_{k+1}^2 & = & \Gamma_k^2 + \rho_2 \left( \Theta_{k+1} - \Psi_{k+1}^2 \right), \label{eq:alg2-update-Gamma-2}
\end{eqnarray}
where $\Theta_k$, $\Psi_k^1$, $\Psi_k^2$, $\Gamma_k^1$, and $\Gamma_k^2$ denote the intermediate solutions at the $k$-th iteration of the ADMM$2$ method.

Specifically, if we employ $\widehat S(\Theta) = \| \mathcal{G}_X \Theta - Y \|_F^2 / N$ as the loss function in Eq.~(\ref{eq:reform-SpTrNorm-2}), the optimization problems in Eqs.~(\ref{eq:alg2-update-Theta}),~(\ref{eq:alg2-update-Psi}),~(\ref{eq:alg2-update-Gamma-1}), and~(\ref{eq:alg2-update-Gamma-2}) can be efficiently solved as below.

{\bf Update on $\Theta$} The optimal $\Theta_{k+1}$ to Eq.~(\ref{eq:alg2-update-Theta}) can be obtained via
\begin{eqnarray*}
\Theta_{k+1} & = & \arg \min_\Theta \left( \frac{1}{N} \| \mathcal{G}_X \Theta - Y \|_F^2 + \langle \Theta, \Gamma_k^1 + \Gamma_k^2 \rangle + \frac{\rho_1}{2} \| \Theta - \Psi_k^1 \|_F^2 + \frac{\rho_2}{2} \| \Theta - \Psi_k^2 \|_F^2 \right). 
\end{eqnarray*}
Note that the optimal $\Theta_{k+1}$ can be obtained via solving a systems of linear equations.

{\bf Update on $\Psi^1$ and $\Psi^2$} The optimal $\Psi_{k+1}^1$ and $\Psi_{k+1}^1$ to Eq.~(\ref{eq:alg2-update-Psi}) can be obtained via
\begin{eqnarray}
\Psi_{k+1}^1 & = & \arg \min_{\Psi^1} \left(\alpha \|\Psi^1\|_* - \langle \Psi^1, \Gamma_k^1 \rangle + \frac{\rho_1}{2} \| \Theta_{k+1} - \Psi^1 \|_F^2 \right), \label{eq:compute-Gamma-1}\\
\Psi_{k+1}^2 & = & \arg \min_{\Psi^2} \left( \beta \|\Psi^2\|_1 - \langle \Psi^2, \Gamma_k^2 \rangle + \frac{\rho_2}{2} \| \Theta_{k+1} - \Psi^2 \|_F^2 \right). \label{eq:compute-Gamma-2}
\end{eqnarray}
It can be verified that Eq.~(\ref{eq:compute-Gamma-1}) admits an analytical solution. Assume $\mbox{rank} \left( \Theta_{k+1} + \Gamma_k^1 / {\rho_1} \right) = r$. Let $\Theta_{k+1} + \Gamma_k^1 / {\rho_1} = U_r \Sigma_r V_r^T$ be the singular value decomposition of $\Theta_{k+1} + \Gamma_k^1 / {\rho_1}$, where $U_r$ and $V_r$ consist of respectively $r$ orthonormal columns, and $\Sigma_r = \mbox{diag} \left\{ (\sigma_1, \sigma_2, \cdots, \sigma_r) \right\}$. Then the optimal $\Psi_{k+1}^1$ is given by
\begin{equation}
\Psi_{k+1}^1 = U_r \hat \Sigma V_r^T, \,\, \hat \Sigma = \mbox{diag} \left\{ \left( \sigma_i - \frac{\alpha}{\rho_1} \right)_+ \right\},
\end{equation}
where $(x)_+ = x$ if $x > 0$ and $(x)_+ = 0$ otherwise.

Moreover, it can also be verified that Eq.~(\ref{eq:compute-Gamma-2}) admits an analytical solution. Let $\psi$, $\theta$, and $\gamma$ be the entries of $\Psi_{k+1}^2$, $\Theta_{k+1}$, and $\Gamma_k^2$ at the same coordinates. The optimal $\psi$ is given by
\begin{eqnarray}
\psi = \left \{ \begin{array} {cc}
                 \theta + \frac{1}{\rho_2} \left( \gamma - \beta \right)   &  \theta + \frac{1}{\rho_2} \gamma > \frac{1}{\rho_2} \beta \\
                 0 & - \frac{1}{\rho_2} \beta \le \theta + \frac{1}{\rho_2} \gamma \le \frac{1}{\rho_2} \beta \\
                 \theta + \frac{1}{\rho_2} \left( \gamma + \beta \right)  & \theta + \frac{1}{\rho_2} \gamma < - \frac{1}{\rho_2} \beta
                 \end{array}. \right.
\end{eqnarray}

}

\end{document}